\documentclass{article}
\usepackage{atma_arxiv,times}

\usepackage{amsmath,amsfonts,bm}

\def\eqref#1{equation~\ref{#1}}

\def\1{\bm{1}}

\DeclareMathAlphabet{\mathsfit}{\encodingdefault}{\sfdefault}{m}{sl}
\SetMathAlphabet{\mathsfit}{bold}{\encodingdefault}{\sfdefault}{bx}{n}

\usepackage{microtype}
\usepackage{hyperref}
\usepackage{url}
\usepackage{booktabs}
\usepackage{amsmath}
\usepackage{amssymb}
\usepackage{amsfonts}
\usepackage{amsthm}
\usepackage{graphicx}
\usepackage{array}
\usepackage{longtable}
\usepackage{pdflscape}
\usepackage{listings}
\usepackage{placeins}
\graphicspath{{./}}

\definecolor{darkblue}{rgb}{0,0,0.5}
\hypersetup{colorlinks=true,citecolor=darkblue,linkcolor=darkblue,urlcolor=darkblue}
\newtheorem{assumption}{Assumption}
\newtheorem{proposition}{Proposition}

\title{ATMA: Long-Context Language Modeling via Polar Attention and Gated-Delta Compression Memory}
\author{Habibullah Akbar \\
Kreasof AI \\
Jakarta, Indonesia \\
\texttt{habibullah.akbar@kreasof.my.id}}

\iclrfinalcopy
\begin{document}
\maketitle

\begin{abstract}
Length extrapolation in language models involves competing objectives: retrieval fidelity, long-document likelihood, short-context quality, and inference cost. We present \textbf{ATMA}, a 378M-parameter hybrid recipe that combines Polar Attention with gated-delta recurrent memory, and study these objectives as a Pareto problem rather than claiming general architectural dominance. Polar Attention separates a normalized direction channel from a bounded participation-ratio magnitude channel. We select the recipe with a complete 120-cell, 1B-token factorial sweep, then train matched NoPE, RoPE, and Polar variants for 9.816B tokens at length 2K and evaluate them through 256K. Across the factorial, memory improves Polar's 64K retrieval score in all 20 matched cells (mean $+47.8$ points), whereas its effect on NoPE is small and inconsistent. At 256K, Polar retains 34.4\% teacher-forced target-token accuracy and 9.0\% exact five-token accuracy; exact retrieval is 18.0\% on synthetic contexts but 0.0\% on FinePDFs contexts. Polar also limits mean fixed-target bits-per-byte degradation to $1.26\times$, at a 1.9-point mean cost on eight short-context tasks. Raven baselines lead BABILong and have length-independent decode state, illustrating a different point on the frontier. Finally, a post-hoc checkpoint audit shows that nearly identical 2K validation curves can conceal a 6.70-nat difference at 256K. Because those runs were neither seed-paired nor randomized across devices, we interpret this as checkpoint variability associated with an infrastructure transition, not a causal hardware effect. Code: \url{https://github.com/kreasof-ai/atma}
\end{abstract}

\section{Introduction}

Large language models increasingly process code repositories, books, legal documents, and scientific papers that exceed their pretraining sequence length. Full softmax attention preserves token-level access in principle, but its probability mass can disperse as the key set grows~\citep{nakanishi2025scalable,velickovic2025softmax,barbero2024glasses}. Sliding windows bound cost by removing older tokens from direct access. Recurrent sequence models instead keep fixed-size state but compress history. Long-context modeling therefore exposes a multi-objective trade-off among retrieval fidelity, sequence likelihood, reasoning, short-context quality, and serving cost.

We present \textbf{ATMA}, a hybrid architecture that combines short gated convolutions with four \textbf{Polar Attention} layers as global mixers. Polar Attention represents ``what matched'' with a normalized direction and ``how much matched'' with a bounded participation-ratio channel. A learned null sink follows the extreme-value scale of irrelevant scores. Each global layer also contains gated-delta memory, providing a compressed recurrent history alongside full-context attention (Appendix~\ref{appendix:theory}).

\begin{figure}[t]
\centering
\includegraphics[width=\linewidth]{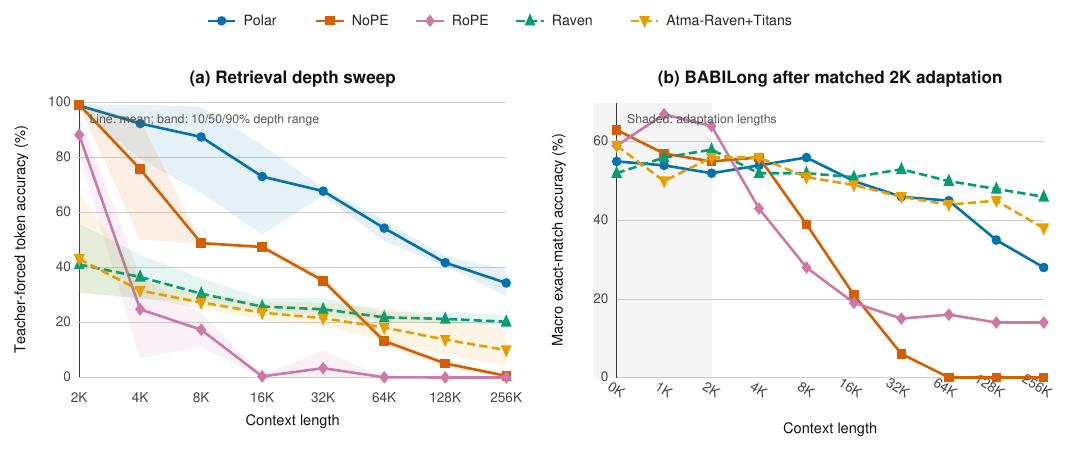}
\caption{Results after 9.816B-token pretraining. \textbf{Left:} teacher-forced target-token retrieval accuracy, averaged over tasks, haystack suites, and needle depths. \textbf{Right:} BABILong exact match after adaptation. Polar retains more retrieval signal than the matched attention variants, whereas the separately optimized Raven family leads long-context BABILong.}
\label{fig:main_results}
\end{figure}

Our contributions are threefold. First, a complete 120-cell, 1B-token factorial sweep measures interactions among attention core, recurrent memory, training-time local windows, distractor alignment, and representation regularization. Second, we promote one recipe to matched 9.816B-token NoPE, RoPE, and Polar runs and evaluate retrieval, BABILong, fixed-target likelihood, short-context tasks, and serving. Raven models provide a strong but separately optimized recurrent-family comparison~\citep{raven2026}. Figure~\ref{fig:main_results} summarizes the resulting frontier: Polar improves long-context retrieval and likelihood among the matched attention variants, while Raven favors BABILong and decode efficiency. Third, a post-hoc audit demonstrates that short-context convergence alone did not predict extrapolation for three NoPE checkpoints. This audit motivates reporting extended-context behavior and execution metadata, but does not identify the device transition as causal.

\section{Related Work}

Softmax attention~\citep{Vaswani+2017} has quadratic compute and can lose selectivity as the number of keys grows. RoPE~\citep{rope2021} and ALiBi~\citep{alibi2021} alter positional information while retaining dense simplex normalization. Scalable-Softmax, adaptive-temperature analyses, and sparse alternatives address dispersion or length-dependent sharpness~\citep{nakanishi2025scalable,velickovic2025softmax,vasylenko2026sparse}. Threshold Differential Attention (TDA) uses a $\sqrt{\log n}$ threshold motivated by maxima of noise scores~\citep{huang2026threshold}. Polar combines length-dependent temperature and extreme-value scaling with a differentiable null sink and bounded magnitude channel. We compare Polar with matched NoPE and RoPE controls; comparisons with these other methods are conceptual rather than empirical.

Linear attention, state-space models, and gated recurrences exchange full-resolution token access for compressed fixed-size history~\citep{mamba2023,deltanet2024,yang2024gated,titans2025}. Raven combines routed sparse memory, decay dynamics, and gated-delta recurrence~\citep{raven2026}. We treat it as a strong sub-quadratic baseline, but not as an attention ablation because it uses a different model family and optimizer. Hybrid models such as LFM2 combine local and global mixers for efficiency~\citep{lfm2_2025,physics_lms_2025}. ATMA follows this hybrid pattern while changing the global mixer and adding a recurrent memory stream.

\section{Method}

\subsection{Hybrid architecture}

ATMA is a 16-layer decoder with 12 LFM2-style gated convolutional layers interleaved with four global attention layers. All blocks use pre-normalization and squared-ReLU MLPs. The attention variants share hidden size 1024, eight query heads, two key-value heads, head dimension 128, tokenizer, data order, optimizer, schedule, and training budget; only the attention core changes among NoPE, RoPE, and Polar. Canon depthwise causal convolutions are applied to $q$, $k$, and $v$ before attention~\citep{physics_lms_2025}.

\subsection{Polar attention}

For query $i$ and its $n_i=i+1$ visible keys, let $\sigma_{ij}=q_i^\top k_j/\sqrt{d_k}$. Each head learns a length temperature and a virtual null logit,
\begin{equation}
 t_i=1+\operatorname{softplus}(\alpha)\log n_i,\qquad
 \nu_i=b+\operatorname{softplus}(\gamma)\sqrt{\log(n_i+1)}.
\label{eq:length}
\end{equation}
The second term tracks the extreme-value scale of unrelated scores. Appending the null as a softmax competitor gives
\begin{equation}
 w_i=\operatorname{softmax}\!\left(t_i[\sigma_{i\bullet},\nu_i]\right),\qquad
 s_i=\sum_jw_{ij}v_j+w_{iN}v_{\rm null}.
\end{equation}
The \emph{direction} channel is $c_i=s_i/\max(\lVert s_i\rVert_2,\varepsilon)$, hence $\lVert c_i\rVert_2\leq1$. To retain bounded information about match multiplicity, real-key weights are renormalized as $\hat w_{ij}=w_{ij}/\sum_kw_{ik}$ and summarized by the participation ratio
\begin{equation}
 n_{{\rm eff},i}=\left(\sum_j\hat w_{ij}^2\right)^{-1},\quad
 m_{{\rm eff},i}=n_{{\rm eff},i}(1-w_{iN}),\quad
 \mu_i=\tanh\!\left(\operatorname{softplus}(\beta)\log(1+m_{{\rm eff},i})\right).
\end{equation}
Thus $\mu_i\in[0,1)$, while the direction channel removes radial scale. This is a channel decomposition, not an independence claim: changing the match set can rotate $c_i$. The Polar output is a gated projection of $c_i$ plus a learned projection of $\mu_i$. These choices were selected through pre-integration falsification tests: additive soft counts accumulated leakage with length, fixed or relative floors were overtaken by noise tails, and an unbounded $\log m$ channel left the training range. The standalone prototype was not retained; Appendix~\ref{appendix:polar_implementation} distinguishes this design provenance from the reproducible integrated oracle and kernel tests. A distractor-alignment auxiliary loss was tested in the factorial but is disabled in the promoted recipe.

\subsection{Gated-delta memory}

Each global layer also maintains a per-head recurrent state $M_t\in\mathbb{R}^{d_v\times d_k}$. With unit-normalized keys and queries, retention $\gamma_t$, and write gate $\beta_t$, the decay-first, self-inclusive recurrence is
\begin{equation}
 M_t=\gamma_tM_{t-1}(I-\beta_tk_tk_t^\top)+\beta_tv_tk_t^\top,\qquad r_t=M_tq_t.
\label{eq:memory}
\end{equation}
RMS-normalized $r_t$ is gated, projected, and added to the Polar output. The projection is zero-initialized. Unit $L_2$ normalization is important: RMS-normalized keys have $\lVert k\rVert_2^2=d_k$ and made the tested delta recurrence diverge, whereas unit keys keep the update eigenvalue along $k$ within the gate-controlled range. Under bounded values and a uniformly contractive retention gate, Appendix~\ref{appendix:theory} gives a sequence-length-independent state bound. Our empirical claim is narrower: the implemented state remains stable in the tested length sweep.

Figure~\ref{fig:machinery} summarizes the two mechanisms and their controlled diagnostics. Appendix~\ref{appendix:reg_modes} defines every regularization mode in the factorial sweep, Appendix~\ref{appendix:polar_implementation} gives the numerical oracle and streaming Triton implementation, and Appendix~\ref{appendix:systems} details the kernel and serving optimizations.

\begin{figure}[t]
\centering
\includegraphics[width=0.88\linewidth]{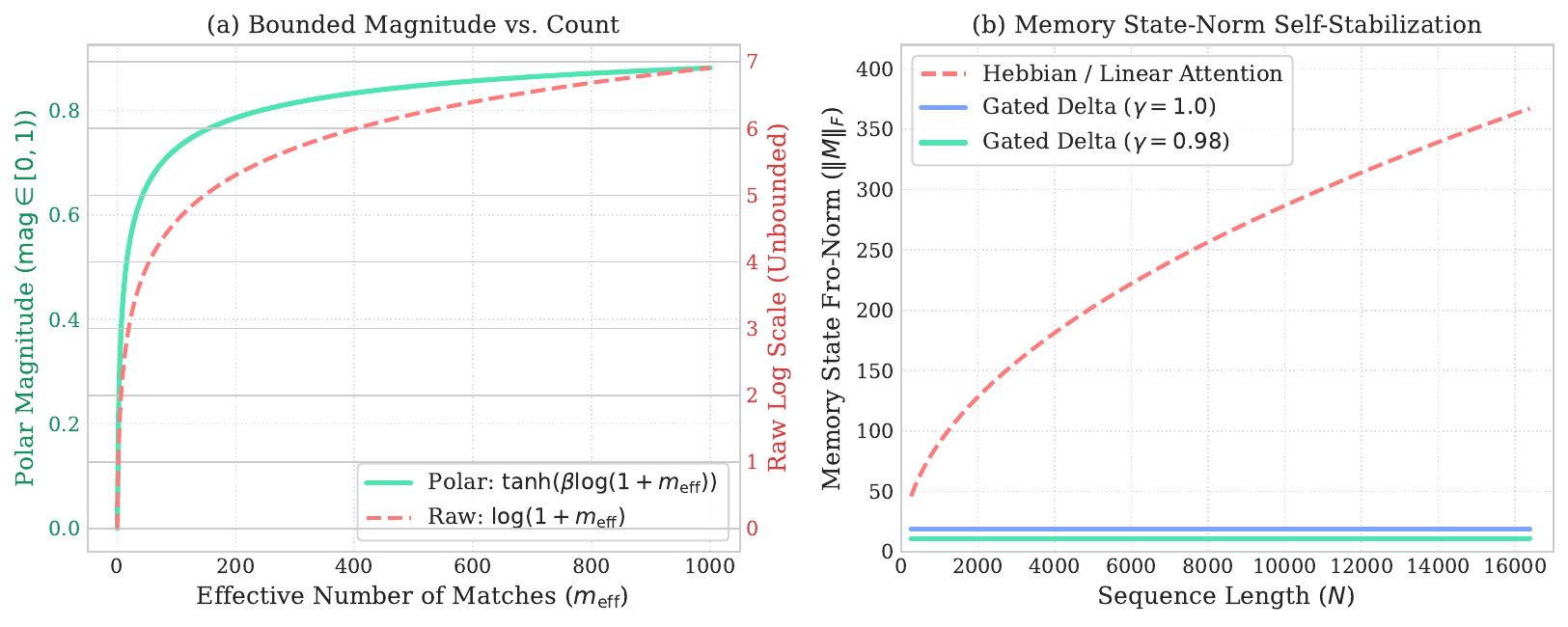}
\caption{ATMA diagnostics. \textbf{Left:} the effective match count is mapped to a bounded magnitude channel. \textbf{Right:} the tested gated-delta state remains flat with length, unlike an ungated Hebbian memory. These controlled diagnostics illustrate the mechanism, they are not a network-level stability proof.}
\label{fig:machinery}
\end{figure}

\section{Experimental Design}

\subsection{Recipe selection}
Stage I trains all $3\times5\times2\times2\times2=120$ combinations of attention type (Polar, RoPE, NoPE), regularization, distractor loss, memory, and a 1024-token training window. Every model receives approximately 1B FineWeb-Edu tokens at length 2048 on an NVIDIA L4, and every evaluation uses full context. These are distinct factorial cells rather than seed replicates; Stage I selects the promoted recipe and is not used for headline absolute comparisons.

\subsection{Matched comparison}
Stage II trains 378.16--378.22M-parameter NoPE, RoPE, and Polar models for 9.816B tokens at length 2048 in the same L40S software and hardware environment. All three use the same Muon/AdamW parameter split. Raven Native (388.54M) and Atma-Raven-Titans (382.37M) use the same data, tokenizer, token budget, and device class, but use AdamW. This follows a 1B-token Raven Native pilot with the ATMA Muon schedule: validation loss improved to 3.76 nats at step 625, then destabilized and ended at 4.62, with 0\% retrieval through 64K. We therefore report the stable Raven models as a separate model-family/optimizer group. The single failed pilot does not establish incompatibility with every Muon configuration.

\paragraph{Evaluation.}
The evaluation contains 24,000 paired retrieval trials over synthetic and FinePDFs haystacks, lengths 2K--256K, depths 10/50/90\%, and 50 trials per cell; BABILong adaptation and exact match~\citep{babilong2024}; fixed-target bits per byte (BPB) on FinePDFs, PG-19, and Proof-Pile; eight likelihood-scored downstream tasks (22,939 examples per model); and single-sequence prefill/decode on one L40S. Retrieval teacher-forces the five-token target and reports both token accuracy and exact five-token accuracy. Neither metric is autoregressive generation success. BABILong uses custom splits and prompts, so its absolute scores are internally comparable rather than leaderboard replications. Protocols, denominators, and complete matrices appear in Appendix~\ref{appendix:protocol}.

\section{Results}

\subsection{Component selection}

Table~\ref{tab:ablation} summarizes matched contrasts from the full factorial rather than selecting individual cells. At 64K, adding memory improves Polar in all 20 combinations of regularizer, distractor setting, and window setting, by 47.8 points on average. The corresponding NoPE effect is small and heterogeneous, and RoPE remains at zero. Among memory-enabled cells, Polar is never worse than NoPE and exceeds RoPE in all 20 comparisons. A training window reduces Polar+memory by 25.5 points on average and is harmful in 8/10 matched cells; distractor alignment has a smaller, nonuniform effect (mean $-6.0$ points; harmful in 6/10). We therefore promote full-context Polar plus memory without distractor alignment. The selected baseline cell reaches 92.5\% token accuracy at 64K; the complete matrix is in Appendix~\ref{appendix:full_ablation}.

\begin{table}[t]
\centering
\small
\setlength{\tabcolsep}{3.4pt}
\resizebox{\linewidth}{!}{%
\begin{tabular}{lrrrr}
\toprule
\textbf{64K matched contrast} & \textbf{Mean $\Delta$} & \textbf{Median $\Delta$} & \textbf{Range} & \textbf{Positive} \\
\midrule
Polar: memory on $-$ off & +47.8 & +51.3 & $[+7.5,+92.5]$ & 20/20 \\
NoPE: memory on $-$ off & +4.6 & +1.9 & $[-15.0,+21.3]$ & 11/20 \\
RoPE: memory on $-$ off & 0.0 & 0.0 & $[0.0,0.0]$ & 0/20 \\
Polar+memory $-$ NoPE+memory & +41.7 & +41.9 & $[0.0,+77.5]$ & 19/20 (+1 tie) \\
Polar+memory $-$ RoPE+memory & +50.3 & +51.9 & $[+18.8,+92.5]$ & 20/20 \\
\bottomrule
\end{tabular}}
\caption{Matched contrasts across the 120-cell, 1B-token factorial. Values are percentage-point changes in teacher-forced five-token accuracy at 64K. Each row holds all other factorial axes fixed. ``Positive'' excludes one numerical tie in the Polar--NoPE comparison.}
\label{tab:ablation}
\end{table}

\subsection{A multi-objective frontier}

Table~\ref{tab:endpoints} reports extrapolation endpoints. At 2K, NoPE and Polar achieve nearly perfect target-token accuracy. At 256K, NoPE and RoPE fall to 0.6\% and 0.0\%, while Polar retains 34.4\% averaged across tasks, suites, and depths. Exact five-token accuracy gives a stricter boundary: Polar reaches 9.0\% overall at 256K, entirely from synthetic haystacks (18.0\% synthetic; 0.0\% FinePDFs). At 64K, Polar obtains 65.7\% exact accuracy on synthetic contexts but 0.0\% on FinePDFs, despite 16.3\% target-token accuracy there. We therefore claim retained target signal and exact synthetic retrieval, not successful real-text retrieval at those lengths. Full token, exact, depth, and haystack matrices are in Appendix~\ref{appendix:retrieval}.

\begin{table}[t]
\centering
\small
\setlength{\tabcolsep}{3.2pt}
\resizebox{\linewidth}{!}{%
\begin{tabular}{llrrrrrrrr}
\toprule
\textbf{Group} & \textbf{Model} & \multicolumn{3}{c}{\textbf{Retrieval (\%)}} & \multicolumn{2}{c}{\textbf{BABILong (\%)}} & \multicolumn{3}{c}{\textbf{BPB}} \\
\cmidrule(lr){3-5}\cmidrule(lr){6-7}\cmidrule(lr){8-10}
& & \textbf{Tok. 2K} & \textbf{Tok. 256K} & \textbf{Exact 256K} & \textbf{2K} & \textbf{256K} & \textbf{2K} & \textbf{256K} & \textbf{ratio} \\
\midrule
Matched & NoPE & 99.2 & 0.6 & 0.0 & 55 & 0 & \textbf{1.432} & 8.097 & 5.65$\times$ \\
Matched & \textbf{Polar} & 98.9 & \textbf{34.4} & \textbf{9.0} & 52 & \textbf{28} & 1.451 & \textbf{1.825} & \textbf{1.26$\times$} \\
Matched & RoPE & 88.2 & 0.0 & 0.0 & \textbf{64} & 14 & 1.439 & 2.512 & 1.75$\times$ \\
\midrule
Raven/AdamW & Atma-Raven-Titans & 43.1 & 10.0 & 0.0 & 56 & 38 & \textbf{1.525} & \textbf{1.551} & 1.02$\times$ \\
Raven/AdamW & \textbf{Raven Native} & 41.1 & \textbf{20.3} & 0.0 & \textbf{58} & \textbf{46} & 1.581 & 1.597 & \textbf{1.01$\times$} \\
\bottomrule
\end{tabular}}
\caption{Long-context endpoints. Retrieval averages tasks, suites, and depths; ``Tok.'' is teacher-forced target-token accuracy and ``Exact'' requires all five target tokens to be correct. BABILong is macro exact match. BPB averages three fixed-target datasets; lower is better. Bold is best within each comparison group.}
\label{tab:endpoints}
\end{table}

\begin{table}[t]
\centering
\small
\setlength{\tabcolsep}{3.2pt}
\resizebox{\linewidth}{!}{%
\begin{tabular}{llrrrrrrrr}
\toprule
\textbf{Model} & \textbf{Haystack} & \textbf{2K} & \textbf{4K} & \textbf{8K} & \textbf{16K} & \textbf{32K} & \textbf{64K} & \textbf{128K} & \textbf{256K} \\
\midrule
Polar & Synthetic & 99.6 & 98.4 & 98.5 & 97.0 & 94.9 & 92.3 & 80.3 & 68.4 \\
& FinePDFs & 98.6 & 86.3 & 77.9 & 49.1 & 40.7 & \textbf{16.3} & 3.2 & 0.3 \\
NoPE & Synthetic & 99.5 & 98.7 & 98.9 & 94.9 & 70.3 & 26.5 & 10.2 & 1.2 \\
& FinePDFs & 99.3 & 52.9 & 0.0 & 0.0 & 0.0 & 0.0 & 0.0 & 0.0 \\
RoPE & Synthetic & 81.7 & 44.9 & 33.1 & 0.8 & 6.8 & 0.1 & 0.0 & 0.0 \\
& FinePDFs & 94.8 & 4.7 & 0.2 & 0.0 & 0.0 & 0.0 & 0.0 & 0.0 \\
Raven Native & Synthetic & 55.9 & 57.7 & 51.4 & 48.3 & 44.3 & 40.9 & 38.9 & 35.1 \\
& FinePDFs & 22.1 & 15.4 & 8.2 & 3.3 & 5.3 & 2.9 & 3.7 & 5.4 \\
\bottomrule
\end{tabular}}
\caption{Zero-shot retrieval accuracy (\%) by haystack type. Values average teacher-forced token accuracy over needle depths. The recurrent hybrid is omitted for space and reported in Appendix~\ref{appendix:retrieval}.}
\label{tab:haystack}
\end{table}

Table~\ref{tab:haystack} separates target-token accuracy by haystack and length; Appendix~\ref{appendix:retrieval} reports the corresponding exact-sequence values. Polar also limits mean BPB degradation to $1.26\times$ at 256K, compared with $5.65\times$ for NoPE and $1.75\times$ for RoPE. This stability accompanies a short-context cost: mean accuracy over eight 2K controls is 43.29\% for Polar, 45.15\% for NoPE, and 44.60\% for RoPE. Figure~\ref{fig:short_context_controls} shows that the gap is task-dependent rather than uniform. The 1.86-point mean gap to NoPE prevents a blanket quality claim. Exact per-task values are in Appendix~\ref{appendix:quality}.

\begin{figure}[t]
\centering
\includegraphics[width=\linewidth]{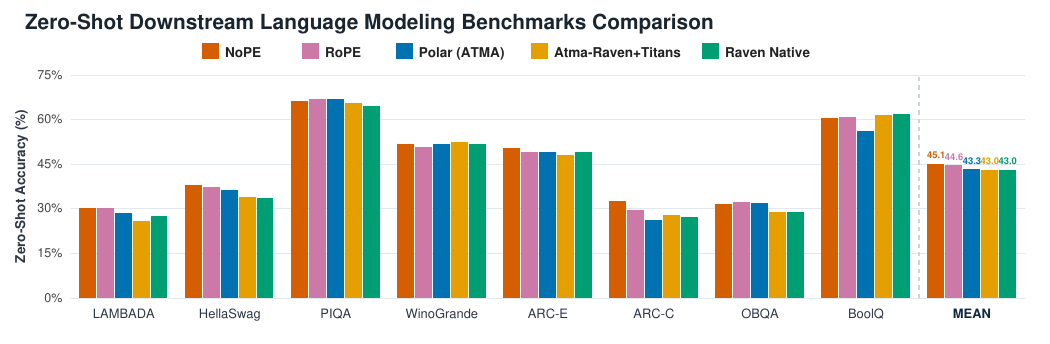}
\caption{Zero-shot accuracy on eight 2K language-modeling controls (22,939 examples per model). The rightmost bars are unweighted task means. Exact values and metric definitions are in Table~\ref{table:base_tasks_breakdown}.}
\label{fig:short_context_controls}
\end{figure}

Table~\ref{tab:systems} makes the quality--systems trade-off explicit. Raven Native reaches 46\% BABILong at 256K and keeps BPB nearly flat, while its fixed recurrent state avoids a sequence-length-dependent KV cache. At 128K, Atma-Raven-Titans decodes at 2.27 ms/token versus 16.3 ms/token for Polar; Polar instead has higher synthetic retrieval and training MFU. The BABILong samples are small: simple Wilson intervals at 256K are 20.1--37.5\% for Polar (28/100) and 36.6--55.7\% for Raven Native (46/100). Because Raven also changes model family and optimizer, these rows locate practical operating points rather than isolate one causal mechanism.

\begin{table}[t]
\centering
\small
\setlength{\tabcolsep}{3.2pt}
\resizebox{\linewidth}{!}{%
\begin{tabular}{llrrrrr}
\toprule
\textbf{Group} & \textbf{Model} & \textbf{Base mean} & \textbf{128K prefill} & \textbf{Decode} & \textbf{Peak alloc.} & \textbf{Train MFU} \\
& & (\%) & (s) & (ms/token) & (GiB) & (\%, 2K) \\
\midrule
Matched & NoPE & 45.1 & 1.47 & 16.4 & 39.40 & 42.77 \\
Matched & Polar & 43.3 & 1.62 & 16.3 & 39.41 & 40.44 \\
Matched & RoPE & 44.6 & 1.47 & 16.5 & 39.41 & 42.45 \\
Raven/AdamW & Atma-Raven-Titans & 43.1 & 1.00 & 2.27 & 8.62 & 30.92 \\
Raven/AdamW & Raven Native & 43.0 & 1.36 & 3.27 & 6.46 & 21.53 \\
\bottomrule
\end{tabular}}
\caption{Short-context and systems trade-offs. Serving uses one sequence and one timing sample on one L40S, so the values are descriptive rather than uncertainty estimates.}
\label{tab:systems}
\end{table}

\FloatBarrier
\section{Checkpoint Variability Audit}

\begin{figure}[!ht]
\centering
\includegraphics[width=\linewidth]{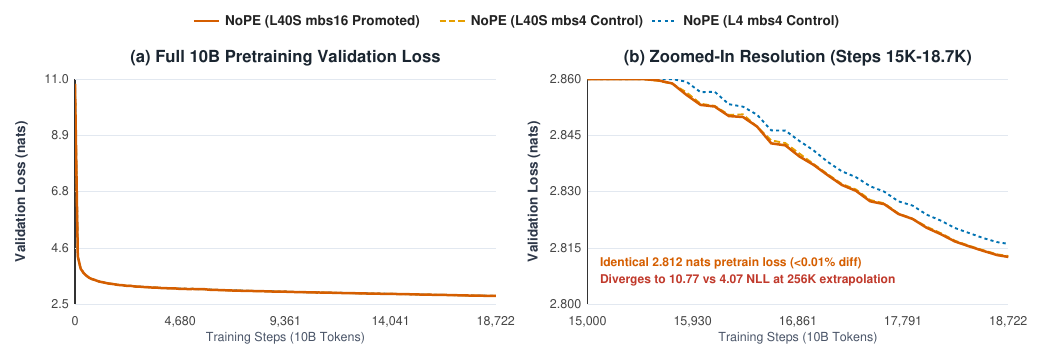}
\caption{NoPE 2K validation trajectories are nearly coincident, yet the checkpoints' 256K NLL values are 10.77, 9.90, and 4.07. The two L40S runs differ in microbatch size; the L4 run differs in device class.}
\label{fig:environment}
\end{figure}

Figure~\ref{fig:environment} presents a post-hoc audit: nearly identical short-context convergence can hide substantially different long-context behavior. Three NoPE runs finish at 2.8126, 2.8128, and 2.8160 validation nats, but their 256K NLLs are respectively 10.77 (L40S, microbatch 16), 9.90 (L40S, microbatch 4), and 4.07 (L4, microbatch 4). The runs were not initialized from a recorded common seed, so this comparison measures checkpoint-to-checkpoint variability, not a device-class treatment effect.

\subsection{Diagnostic journey}
The finding emerged unintentionally during an infrastructure transition rather than a planned device comparison. Stage I and the first 10B NoPE control ran on an L4 with microbatch 4 and produced 4.07 nats at 256K. Moving the promoted runs to L40S with microbatch 16 preserved the 2K trajectory but yielded 10.77 nats at 256K. Re-evaluation under PyTorch 2.12 and 2.13 left both checkpoints unchanged; an L40S microbatch-4 retrain still reached 9.90, and deterministic 2K fixtures showed no gradient or loss discrepancy. These checks exclude evaluation-version mismatch and microbatch size alone in the tested runs. They cannot distinguish random initialization, data-order effects, hardware-dependent kernels, or their interactions. All five headline models were trained in the same L40S environment, so the main model comparison remains environment-matched. Appendix~\ref{appendix:environment} records the audit sequence.

\begin{figure}[!ht]
\centering
\includegraphics[width=\linewidth]{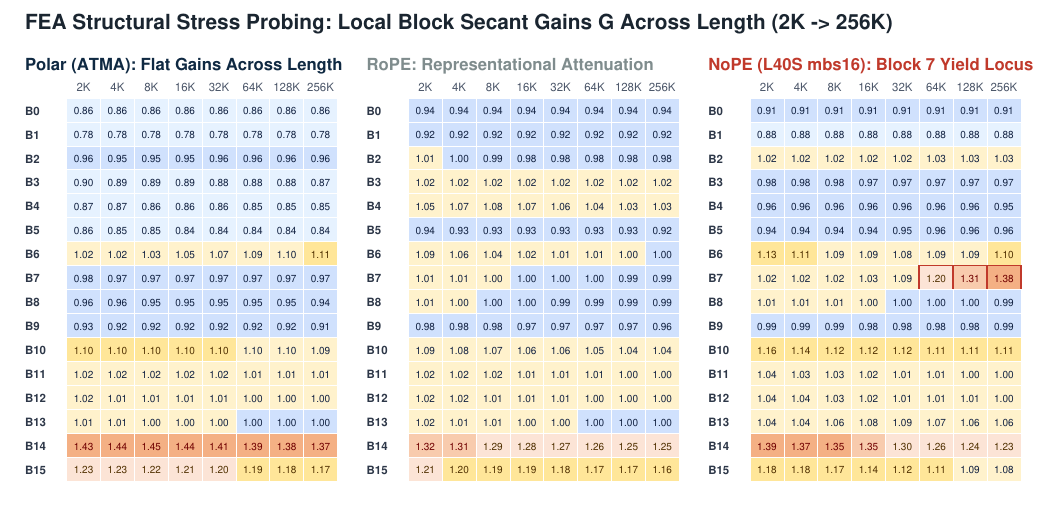}
\caption{Sampled block secant gains across context length. The probe localizes the L40S NoPE checkpoint's change near blocks 6--7, while Polar remains comparatively flat. A sampled gain is a directional diagnostic, not a tight global Lipschitz estimate.}
\label{fig:stress}
\end{figure}
\FloatBarrier

Figure~\ref{fig:stress} localizes the checkpoint difference with a sampled block-level stress probe. The L40S NoPE checkpoint first amplifies the perturbations around blocks 6--7, whereas the L4 checkpoint redistributes them more weakly downstream. This shows that the difference is stored in the trained checkpoints rather than introduced by the evaluation runtime. It does not establish why the training trajectories diverged. The defensible conclusion is that short-context convergence did not predict long-context behavior in these runs. Extrapolation studies should therefore report seeds, kernels, device classes, and extended-context audits; causal hardware claims require replicated, seed-paired interventions. Additional diagnostics are in Appendix~\ref{appendix:diagnostics}.

\subsection{What this audit cannot answer}
The finding opens several concrete questions that the present compute budget and page limit do not resolve:
\begin{enumerate}
\setlength{\itemsep}{0.15em}
\setlength{\parskip}{0pt}
\setlength{\parsep}{0pt}
\item Does the checkpoint divergence replicate under seed-paired device interventions, across software stacks, and at larger model scales?
\item Which training-level numerical difference---kernel selection, reduction or accumulation order, or optimizer-state evolution---is amplified into the divergent checkpoint, and can an early diagnostic predict it?
\item Does Polar's 0.3\% natural-text retrieval at 256K improve with longer or mixed-domain training, or reflect a synthetic-to-natural induction bias?
\item How much of Raven's 256K BABILong lead comes from recurrent state, sparse routing, or AdamW, given that the Muon schedule did not converge?
\end{enumerate}
These questions delimit the current evidence rather than extend its claims. Resolving them requires interventions that separately vary architecture, optimizer, data, and execution environment.

\section{Limitations and Conclusion}

This study uses one model scale, one 2K pretraining regime, and no seed replication sufficient for population-level architectural uncertainty. The Stage I factorial demonstrates robustness across nuisance-factor settings, not across random seeds. Teacher-forced retrieval, custom BABILong adaptation, and single-sample serving measurements answer different questions and should not be collapsed into one score. Polar has zero exact FinePDFs retrieval at 64K and 256K despite retaining partial target-token signal, its theoretical bounds require learned-margin assumptions not yet certified for every head, and Raven is not an optimizer-matched ablation. The documented pre-integration Polar design tests are not a substitute for network-level component ablations, and the present comparisons do not cover every contemporary length-scaling method.

Within those limits, ATMA establishes a reproducible operating point for bounded full-context attention plus recurrent memory. Polar materially extends target-token signal, preserves exact synthetic retrieval, and stabilizes long-document likelihood while remaining competitive on short-context tasks. Raven illustrates the BABILong and serving advantages of compressed recurrent state. The checkpoint audit further shows why length-extrapolation claims require evaluation far beyond the training window and careful reporting of run metadata. We release configurations, logs, checkpoint manifests, benchmark matrices, and numerical proof artifacts so that each reported comparison can be audited.

\subsection*{AI Use Statement}
Generative AI tools were used in this revision to edit and shorten the manuscript, improve document structure, and reaggregate existing archived results. They were not used to generate experimental datasets, train models, or run new evaluations in this revision. The authors manually compare all AI-assisted mathematical text with the original derivations and proof artifacts, verify empirical values against source logs, and take responsibility for every claim and artifact in the submission. This statement describes the present version as required by the ICLR 2027 policy.

\subsection*{Reproducibility Statement}
Model definitions and training settings are described in Sections 3--4. Appendix~\ref{appendix:protocol} specifies evaluation protocols and denominators, Appendices~\ref{appendix:full_ablation}--\ref{appendix:diagnostics} provide detailed tables and diagnostics. The public artifact at \url{https://github.com/kreasof-ai/atma} contains configurations, logs, checkpoint manifests, benchmark payloads, and machine-checked proof skeletons.

\clearpage
\bibliography{colm2026_conference,iclr2027_additions}

@inproceedings{Vaswani+2017,
 author = {Vaswani, Ashish and Shazeer, Noam and Parmar, Niki and Uszkoreit, Jakob and Jones, Llion and Gomez, Aidan N and Kaiser, \L ukasz and Polosukhin, Illia},
 booktitle = {Advances in Neural Information Processing Systems},
 pages = {5998--6008},
 publisher = {Curran Associates, Inc.},
 title = {Attention is All you Need},
 volume = {30},
 year = {2017}
}

@article{titans2025,
  title={Titans: Learning to Memorize at Test Time},
  author={Behrouz, Ali and Zhong, Peilin and Mirrokni, Vahab},
  journal={arXiv preprint arXiv:2501.00663},
  year={2025}
}

@article{lfm2_2025,
  title={LFM2 Technical Report},
  author={Hasani, Ramin and others},
  journal={arXiv preprint arXiv:2511.23404},
  year={2025}
}

@article{physics_lms_2025,
  title={Physics of Language Models: Part 4.1, Architecture Design and the Magic of Canon Layers},
  author={Allen-Zhu, Zeyuan},
  journal={arXiv preprint arXiv:2512.17351},
  year={2025}
}

@article{rope2021,
  title={Roformer: Enhanced transformer with rotary position embedding},
  author={Su, Jianlin and others},
  journal={arXiv preprint arXiv:2104.09864},
  year={2021}
}

@article{deltanet2024,
  title={Parallelizing Linear Transformers with the Delta Rule over Sequence Length},
  author={Yang, Songlin and Wang, Bailin and Zhang, Yu and Shen, Yikang and Kim, Yoon},
  journal={arXiv preprint arXiv:2406.06484},
  year={2024}
}

@article{yang2024gated,
  title={Gated Delta Networks: Improving Mamba2 with Delta Rule},
  author={Yang, Songlin and Kautz, Jan and Hatamizadeh, Ali},
  journal={arXiv preprint arXiv:2412.06464},
  year={2024}
}

@article{mamba2023,
  title={Mamba: Linear-time sequence modeling with selective state spaces},
  author={Gu, Albert and Dao, Tri},
  journal={arXiv preprint arXiv:2312.00752},
  year={2023}
}

@article{alibi2021,
  title={Train short, test long: Attention with linear biases enables input length extrapolation},
  author={Press, Ofir and Smith, Noah A and Lewis, Mike},
  journal={arXiv preprint arXiv:2108.12409},
  year={2021}
}

@article{nakanishi2025scalable,
  title={Scalable-Softmax Is Superior for Attention},
  author={Nakanishi, Ken M.},
  journal={arXiv preprint arXiv:2501.19399},
  year={2025}
}

@inproceedings{velickovic2025softmax,
  title={Softmax is not Enough (for Sharp Size Generalisation)},
  author={Velickovic, Petar and Perivolaropoulos, Christos and Barbero, Federico and Pascanu, Razvan},
  booktitle={International Conference on Machine Learning},
  year={2025}
}

@inproceedings{barbero2024glasses,
  title={Transformers Need Glasses! Information Over-Squashing in Language Tasks},
  author={Barbero, Federico and Banino, Andrea and Kapturowski, Steven and Kumaran, Dharshan and Araujo, Joao G. M. and Vitvitskyi, Alex and Pascanu, Razvan and Velickovic, Petar},
  booktitle={Advances in Neural Information Processing Systems},
  year={2024}
}

@inproceedings{vasylenko2026sparse,
  title={Long-Context Generalization with Sparse Attention},
  author={Vasylenko, Pavlo and Pitorro, Hugo and Martins, Andre F. T. and Treviso, Marcos},
  booktitle={International Conference on Learning Representations},
  year={2026}
}

@inproceedings{huang2026threshold,
  title={Threshold Differential Attention for Sink-Free, Ultra-Sparse, and Non-Dispersive Language Modeling},
  author={Huang, Xingyue and Ding, Xueying and Ju, Mingxuan and Liu, Yozen and Shah, Neil and Zhao, Tong},
  booktitle={Proceedings of the 64th Annual Meeting of the Association for Computational Linguistics},
  year={2026},
  url={https://arxiv.org/abs/2601.12145}
}

@article{lejepa2025,
  title={LeJEPA: Provable and Scalable Self-Supervised Learning Without the Heuristics},
  author={Balestriero, Randall and LeCun, Yann},
  journal={arXiv preprint arXiv:2511.08544},
  year={2025}
}

@article{weak_sigreg2026,
  title={Weak-SIGReg: Covariance Regularization for Stable Deep Learning},
  author={Akbar, Habibullah},
  journal={arXiv preprint arXiv:2603.05924},
  year={2026}
}

@inproceedings{babilong2024,
  title={BABILong: Testing the Limits of LLMs with Long Context Reasoning-in-a-Haystack},
  author={Kuratov, Yuri and Bulatov, Aydar and Anokhin, Petr and Rodkin, Ivan and Sorokin, Dmitry and Sorokin, Artyom and Burtsev, Mikhail},
  booktitle={Advances in Neural Information Processing Systems Datasets and Benchmarks Track},
  year={2024},
  url={https://arxiv.org/abs/2406.10149}
}

@article{raven2026,
  title={Raven: High-Recall Sequence Modeling with Sparse Memory Routing},
  author={Afzal, Arshia and Bick, Aviv and Xing, Eric P. and Cevher, Volkan and Gu, Albert},
  journal={arXiv preprint arXiv:2607.25357},
  year={2026},
  url={https://arxiv.org/abs/2607.25357}
}

@inproceedings{kwon2023vllm,
  title     = {Efficient Memory Management for Large Language Model Serving with PagedAttention},
  author    = {Kwon, Woosuk and Li, Zhuohan and Zhuang, Siyuan and Sheng, Ying and Zheng, Lianmin and Yu, Cody Hao and Gonzalez, Joseph E. and Zhang, Hao and Stoica, Ion},
  booktitle = {Proceedings of the 29th Symposium on Operating Systems Principles},
  pages     = {611--626},
  year      = {2023},
  doi       = {10.1145/3600006.3613165}
}
\bibliographystyle{iclr2027_conference}

\appendix
\section{Theoretical Framing}
\label{appendix:theory}

These are sufficient-condition results. They establish noise-odds and norm bounds under explicit assumptions. They do not establish distributional invariance or replace the empirical evidence.

\subsection{Extreme-value null floor}

\begin{assumption}[Sub-Gaussian noise scores]
For irrelevant keys $j\in\mathcal N_n$, $|\mathcal N_n|\leq n$, scores $X_j$ are mean-zero and $v^2$-sub-Gaussian: $\mathbb E\exp(tX_j)\leq\exp(v^2t^2/2)$ for all $t\in\mathbb R$.
\end{assumption}

Let the Polar null floor be $\nu_n=b+s\sqrt{\log(n+1)}$ with $b,s\geq0$.

\begin{proposition}[Noise exceedances]
Under Assumption 1,
\begin{equation}
\mathbb E\!\left[\sum_{j\in\mathcal N_n}{\bf1}\{X_j>\nu_n\}\right]
\leq n\exp\!\left(-\frac{\nu_n^2}{2v^2}\right).
\end{equation}
If $s\geq\sqrt2v$ this is $O(1)$; if the inequality is strict it decays polynomially.
\end{proposition}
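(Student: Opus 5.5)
By linearity of expectation, the expected number of exceedances equals $\sum_{j\in\mathcal N_n}\Pr(X_j>\nu_n)$. Dependence among the noise scores therefore does not matter, and it suffices to bound a single tail probability uniformly in $j$. Since $|\mathcal N_n|\le n$, a uniform per-key bound of $\exp(-\nu_n^2/(2v^2))$ gives the displayed inequality directly.

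For the single-key bound we apply the Chernoff argument to the moment generating function in Assumption 1. For any $t>0$, Markov's inequality applied to $e^{tX_j}$ gives
\[
\Pr(X_j>\nu_n)\le e^{-t\nu_n}\,\mathbb E e^{tX_j}\le \exp\!\left(\frac{v^2t^2}{2}-t\nu_n\right).
\]
Because $b,s\ge0$, we have $\nu_n\ge0$. We therefore choose $t=\nu_n/v^2$, which minimizes the exponent and yields $\exp(-\nu_n^2/(2v^2))$. The degenerate case $\nu_n=0$, i.e.\ $b=s=0$, needs a separate remark: there the claimed bound is $n$, and it holds trivially because the count never exceeds $|\mathcal N_n|\le n$.

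For the asymptotic statements we use $b\ge0$ to drop $b$. This gives $\nu_n^2\ge s^2\log(n+1)$, and hence
\[
n\exp\!\left(-\frac{\nu_n^2}{2v^2}\right)\le n\,(n+1)^{-s^2/(2v^2)}.
\]
If $s\ge\sqrt2\,v$, the exponent satisfies $s^2/(2v^2)\ge1$, so the right-hand side is at most $n/(n+1)<1$, which is $O(1)$. If the inequality is strict, write $s^2/(2v^2)=1+\delta$ with $\delta>0$. The bound then becomes $n(n+1)^{-1-\delta}\le(n+1)^{-\delta}$, which decays polynomially in $n$.

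No step here is a genuine obstacle. The only point needing care is the bookkeeping: the bound must hold for every $n$ rather than only asymptotically, so we need $\nu_n\ge0$ to legitimize the choice $t=\nu_n/v^2>0$, and we must handle the boundary case $\nu_n=0$ separately as above.
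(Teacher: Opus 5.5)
Your proposal is correct and follows the paper's own argument. Like the paper, you apply a per-key sub-Gaussian (Chernoff) tail bound, sum it over at most $n$ keys by linearity, and drop $b\geq0$ to get $\nu_n^2\geq s^2\log(n+1)$, hence the bound $n(n+1)^{-s^2/(2v^2)}$; you additionally spell out the choice $t=\nu_n/v^2$ and the degenerate case $\nu_n=0$, which the paper's two-line proof leaves implicit.
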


\begin{proof}
Apply the sub-Gaussian tail bound to each key and sum. Since $b\geq0$, $\nu_n^2\geq s^2\log(n+1)$.
\end{proof}

Unlike TDA, Polar retains sub-threshold keys. With temperature $\theta_n=1+a\log n$, define the unnormalized noise-to-null odds $R_n=\sum_{j\in\mathcal N_n}\exp(\theta_n(X_j-\nu_n))$.

\begin{proposition}[Vanishing null odds]
Suppose $s>c>\sqrt2v$ and the null floor is $s\sqrt{\log(n+1)}$ up to a nonnegative offset. With probability tending to one,
\begin{equation}
R_n\leq n\exp\!\left(-(1+a\log n)(s-c)\sqrt{\log n}\right).
\end{equation}
For $a>0$, the bound converges to zero faster than any inverse polynomial.
\end{proposition}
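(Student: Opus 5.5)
Let $M_n=\max_{j\in\mathcal N_n}X_j$ and $\nu_n=b+s\sqrt{\log(n+1)}$ with $b\ge0$. The plan is to show that, with probability tending to one, every noise score lies below $\nu_n-(s-c)\sqrt{\log n}$. Each exponential in $R_n$ is then small, and summing at most $n$ terms gives the bound. So the proof splits into a high-probability control of $M_n$, followed by a deterministic termwise bound.

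\emph{Step 1 (maximum of the noise scores).} We use the same union bound as in the exceedance proposition, now at level $u_n=c\sqrt{\log n}$:
\[
\Pr(M_n>u_n)\le n\exp\!\left(-\frac{c^2\log n}{2v^2}\right)=n^{1-c^2/(2v^2)}.
\]
Since $c>\sqrt2 v$, the exponent $1-c^2/(2v^2)$ is negative, so this probability tends to zero. We only need a union bound here; no independence among the $X_j$ is required.

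\emph{Step 2 (termwise bound on the event $M_n\le u_n$).} On this event every $j\in\mathcal N_n$ satisfies
\[
X_j-\nu_n\le c\sqrt{\log n}-b-s\sqrt{\log(n+1)}\le -(s-c)\sqrt{\log n}.
\]
This uses $b\ge0$ and $\sqrt{\log(n+1)}\ge\sqrt{\log n}$. The right-hand side is negative because $s>c$. The temperature $\theta_n=1+a\log n$ is positive (assuming $a\ge0$), so multiplying preserves the inequality:
\[
\exp(\theta_n(X_j-\nu_n))\le\exp\!\left(-(1+a\log n)(s-c)\sqrt{\log n}\right).
\]
Summing over $|\mathcal N_n|\le n$ keys gives the stated bound on $R_n$.

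\emph{Step 3 (rate).} For $a>0$, the logarithm of the bound is
\[
\log n-(s-c)\sqrt{\log n}-a(s-c)(\log n)^{3/2}.
\]
Compare this with $-k\log n$ for any fixed $k$. The term $(\log n)^{3/2}$ eventually dominates every multiple of $\log n$, so the bound is $o(n^{-k})$ for every $k$. This is exactly the claim that it decays faster than any inverse polynomial.

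The main obstacle is Step 1. Choosing $u_n=c\sqrt{\log n}$ with $\sqrt2 v<c$ is what makes the failure probability vanish. The gap $s-c$ is what produces the negative margin in Step 2. Everything else is a deterministic inequality.
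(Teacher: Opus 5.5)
Your proposal is correct and follows the paper's proof: a union bound shows the noise maximum stays below $c\sqrt{\log n}$ with probability tending to one (the paper uses the threshold $c\sqrt{\log(n+1)}$, which makes no difference), and on that event each summand is bounded and the bound is summed over at most $n$ keys. Your Step 3 rate argument and your remark that $\theta_n>0$ only spell out what the paper leaves implicit.
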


\begin{proof}
A union bound gives $\Pr[\max_jX_j>c\sqrt{\log(n+1)}]\leq n(n+1)^{-c^2/(2v^2)}\to0$. On the complementary event, every noise score is at least $(s-c)\sqrt{\log n}$ below the floor. Bound each summand and sum over at most $n$ keys.
\end{proof}

\subsection{Bounded channels and recurrent state}

\begin{proposition}[Bounded Polar readout]
For $c_i=s_i/\max(\lVert s_i\rVert_2,\varepsilon)$, $\lVert c_i\rVert_2\leq1$. For $\mu_i=\tanh(\beta\log(1+m_{{\rm eff},i}))$, $\mu_i\in[0,1)$ when $\beta,m_{{\rm eff},i}\geq0$. A distribution uniform over $m$ real keys has participation ratio $m$.
\end{proposition}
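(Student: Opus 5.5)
The proposition makes three independent elementary claims, and I will check each directly from its definition.

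For the direction channel, I split into two cases according to the denominator $\max(\lVert s_i\rVert_2,\varepsilon)$. If $\lVert s_i\rVert_2\geq\varepsilon$, the denominator equals $\lVert s_i\rVert_2$ and $c_i$ is a unit vector, so $\lVert c_i\rVert_2=1$. If $\lVert s_i\rVert_2<\varepsilon$, the denominator is $\varepsilon$ and $\lVert c_i\rVert_2=\lVert s_i\rVert_2/\varepsilon<1$. The degenerate case $s_i=0$ falls in the second branch and gives $c_i=0$, so no division by zero occurs because $\varepsilon>0$. Absolute homogeneity of the norm is the only tool needed.

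For the magnitude channel, I use monotonicity of the maps involved.
\begin{itemize}
\item Since $m_{{\rm eff},i}\geq0$, we have $1+m_{{\rm eff},i}\geq1$, hence $\log(1+m_{{\rm eff},i})\geq0$.
\item Multiplying by $\beta\geq0$ keeps the argument $x=\beta\log(1+m_{{\rm eff},i})$ in $[0,\infty)$.
\item Because $\tanh$ is increasing with $\tanh 0=0$ and $\tanh x<1$ for every finite $x$, we get $\tanh x\in[0,1)$.
\item Finiteness of $x$ holds because $m_{{\rm eff},i}$ is finite: $n_{\rm eff}\leq n_i$ and $1-w_{iN}\leq1$.
\end{itemize}
I will also note that the nonnegativity hypotheses are automatically met in the architecture. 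There $\beta$ is replaced by $\operatorname{softplus}(\beta)>0$, and $m_{{\rm eff},i}=n_{{\rm eff},i}(1-w_{iN})$ is a product of two nonnegative quantities. The first is the reciprocal of a positive sum of squares, and the second holds because $w_{iN}\in(0,1)$ is a softmax weight.

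For the participation ratio, take $\hat w_j=1/m$ on $m$ keys and zero elsewhere. Then $\sum_j\hat w_j^2=m\cdot m^{-2}=1/m$, whose reciprocal is $m$. I will add the standard remark that, for any distribution supported on $m$ keys, Cauchy--Schwarz gives $1=(\sum_j\hat w_j)^2\leq m\sum_j\hat w_j^2$. So the participation ratio lies in $[1,m]$, with equality at $m$ exactly for the uniform case. This justifies reading $n_{\rm eff}$ as an effective match count, though it is not required for the statement.

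None of the steps is a genuine obstacle. The only point needing care is the boundary behaviour: the strict upper bound $\mu_i<1$ relies on finiteness of $m_{{\rm eff},i}$, and the $\varepsilon$-branch covers small or zero $s_i$.
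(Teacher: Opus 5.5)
Your proof is correct and follows the same route as the paper's: a case split on the $\max$ in the denominator, the range of $\tanh$ on a nonnegative finite argument, and direct evaluation of $\bigl(\sum_{j=1}^m (1/m)^2\bigr)^{-1}=m$. Your extra remarks are correct but not needed for the statement: the Cauchy--Schwarz bound $n_{\rm eff}\in[1,m]$ and the observation that the architecture enforces the hypotheses through $\operatorname{softplus}(\beta)>0$ and $w_{iN}\in(0,1)$.
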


\begin{proof}
The first two claims follow from the denominator and range of $\tanh$. Uniform weights satisfy $(\sum_{j=1}^m(1/m)^2)^{-1}=m$.
\end{proof}

\begin{assumption}[Contractive memory]
For every $t$, $\lVert q_t\rVert_2=\lVert k_t\rVert_2=1$, $\lVert v_t\rVert_2\leq V$, $0\leq\beta_t\leq1$, and $0\leq\gamma_t\leq\bar\gamma<1$.
\end{assumption}

\begin{proposition}[Length-bounded state]
For recurrence~\eqref{eq:memory},
\begin{equation}
\lVert M_t\rVert_F\leq\bar\gamma^t\lVert M_0\rVert_F+\frac{V}{1-\bar\gamma}.
\end{equation}
\end{proposition}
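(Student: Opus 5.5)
The argument is a one-step contraction estimate followed by unrolling the recurrence. Write the update as $M_t=\gamma_tM_{t-1}P_t+\beta_tv_tk_t^\top$ with $P_t=I-\beta_tk_tk_t^\top$. Under Assumption 2, $k_t$ is a unit vector, so $P_t$ is symmetric with eigenvalue $1-\beta_t\in[0,1]$ along $k_t$ and eigenvalue $1$ on the orthogonal complement. Hence the spectral norm satisfies $\lVert P_t\rVert_2\leq1$. This is exactly where unit $L_2$ normalization of keys is needed: with $\lVert k\rVert_2^2=d_k$ the eigenvalue $1-\beta_td_k$ could leave $[-1,1]$.

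\textbf{One-step bound.} Use the submultiplicative inequality $\lVert AB\rVert_F\leq\lVert A\rVert_F\lVert B\rVert_2$ to get
\begin{equation}
\lVert\gamma_tM_{t-1}P_t\rVert_F\leq\gamma_t\lVert M_{t-1}\rVert_F\leq\bar\gamma\lVert M_{t-1}\rVert_F .
\end{equation}
The rank-one write term has $\lVert\beta_tv_tk_t^\top\rVert_F=\beta_t\lVert v_t\rVert_2\lVert k_t\rVert_2\leq V$, using $\beta_t\leq1$. The triangle inequality then gives the scalar recursion
\begin{equation}
x_t\leq\bar\gamma x_{t-1}+V,\qquad x_t=\lVert M_t\rVert_F .
\end{equation}

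\textbf{Unrolling.} Induction on $t$ yields
\begin{equation}
x_t\leq\bar\gamma^tx_0+V\sum_{s=0}^{t-1}\bar\gamma^s .
\end{equation}
Since $\bar\gamma<1$, the geometric sum is at most $1/(1-\bar\gamma)$, which is the stated bound. It is independent of $t$, so the state bound does not grow with sequence length.

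\textbf{Main subtlety.} The only point needing care is the operator-norm bound on $P_t$. It requires both $\lVert k_t\rVert_2=1$ and $\beta_t\in[0,1]$; everything else is a routine geometric series. Note that $q_t$ plays no role in bounding the state itself. It would only enter a corollary bounding the readout, $\lVert r_t\rVert_2\leq\lVert M_t\rVert_2\lVert q_t\rVert_2\leq\lVert M_t\rVert_F$.
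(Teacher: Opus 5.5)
Your proposal is correct and follows the paper's proof: bound $\lVert I-\beta_tk_tk_t^\top\rVert_2\leq1$ and the rank-one write term by $V$, derive $\lVert M_t\rVert_F\leq\bar\gamma\lVert M_{t-1}\rVert_F+V$, and unroll the geometric series. You also make explicit the steps the paper leaves implicit, namely the eigenvalue $1-\beta_t\in[0,1]$ along $k_t$ and the inequality $\lVert AB\rVert_F\leq\lVert A\rVert_F\lVert B\rVert_2$.
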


\begin{proof}
$I-\beta_tk_tk_t^\top$ has spectral norm at most one and $\lVert\beta_tv_tk_t^\top\rVert_F\leq V$. Thus $\lVert M_t\rVert_F\leq\bar\gamma\lVert M_{t-1}\rVert_F+V$; unrolling gives the result.
\end{proof}

The Lean artifact checks the finite-sum, exponential-monotonicity, bounded-channel composition, and invariant-set induction steps. It takes the sub-Gaussian tail, high-probability margin, and one-step contraction as premises. It is therefore a machine-checked proof skeleton, not end-to-end formal verification. The learned conditions $b\geq0$ and $s>c>\sqrt2v$ require an empirical head-wise audit before they can be asserted for a trained checkpoint.

\section{Regularization Modes}
\label{appendix:reg_modes}

The factorial grid varies a representation regularizer applied to the model's signature stream. For every non-baseline mode, the sweep uses the same weight $\alpha_{\rm sig}=0.01$ and optimizes
\begin{equation}
\mathcal L=(1-\alpha_{\rm sig})\mathcal L_{\rm LM}
+\alpha_{\rm sig}\mathcal L_{\rm reg}
+\lambda_{\rm dist}\mathcal L_{\rm dist}.
\end{equation}
The weak and strong modes follow the two SIGReg families introduced by Weak-SIGReg~\citep{weak_sigreg2026} and LeJEPA's strong SIGReg objective~\citep{lejepa2025}. The discrete and zipfian modes are local variants that combine those ideas with stronger geometric priors. Table~\ref{table:reg_modes} defines all five sweep settings.

\begin{table}[!ht]
\centering
\small
\begin{tabular}{p{0.17\linewidth}p{0.76\linewidth}}
\toprule
\textbf{Mode} & \textbf{Meaning in the sweep} \\
\midrule
\texttt{baseline} & No signature-stream regularization; equivalently, $\alpha_{\rm sig}=0$. \\
\texttt{weak} & Weak SIGReg: center a random sketch of the token representations and match its covariance to the identity, $\lVert\operatorname{Cov}(Sx)-I\rVert_F$. This encourages decorrelated, unit-variance features at low cost. \\
\texttt{strong} & Strong SIGReg: project representations onto random one-dimensional directions and match their empirical characteristic functions to the standard Gaussian characteristic function over a fixed frequency grid. \\
\texttt{discrete} & Local variant: normalize each representation to the sphere of radius $\sqrt d$, then apply weak covariance matching. This preserves whitening pressure while encouraging fixed-norm, discrete-code-like directions. \\
\texttt{zipfian} & Local variant: normalize directions for an orthogonality penalty, then match sorted representation magnitudes to a Zipf profile $r^{-1}$. This biases the signature stream toward heavy-tailed feature use rather than uniform energy allocation. \\
\bottomrule
\end{tabular}
\caption{Regularization modes in the 120-cell factorial sweep. Weak and strong are SIGReg-style covariance and distribution-matching objectives; discrete and zipfian are exploratory local variants.}
\label{table:reg_modes}
\end{table}
\FloatBarrier

\section{Polar Attention Implementation}
\label{appendix:polar_implementation}

\subsection{Design provenance and scope}
Before integration, the Polar choices were developed by falsifying simpler alternatives in a standalone synthetic sandbox. Additive soft counts accumulated small per-key leakage with sequence length; thresholds based on a fixed number of standard deviations retained a constant fraction of noise keys; a fixed null logit was eventually overtaken by the maximum noise score; and an unbounded $\log m$ magnitude exposed the output projection to values beyond its training range. These failures motivated participation ratio, the $\sqrt{\log n}$ null floor, and the bounded $\tanh(\beta\log(1+m))$ map. The standalone prototype was subsequently removed and is not part of the released artifact. We therefore use these tests as design provenance, not as model-scale component ablations or independently reproducible benchmark evidence. The integrated equations, materialized oracle, streaming implementation, and parity tests described below are retained.

\subsection{Numerical oracle}
The materialized PyTorch path forms the masked real-key logits and virtual null logit in FP32, applies a single softmax, and computes the direction and magnitude channels from the resulting weights. It is used as a numerical oracle for forward and backward tests, not as the production long-context path, because materializing the $T\times T$ score matrix is memory-prohibitive at the evaluated lengths.

\subsection{Streaming sufficient statistics}
The Triton kernel follows the online-max strategy used by FlashAttention-style kernels. For one query row, it streams $K,V$ blocks and retains only four FP32 statistics: the running maximum $M$, real-key exponential sum $L$, squared exponential sum $Q_2$, and unnormalized value sum $S$. If a new block changes the row maximum, $\alpha=\exp(M_{\rm old}-M_{\rm new})$ rescales each statistic as shown in Table~\ref{table:polar_streaming_state}.

\begin{table}[!ht]
\centering
\small
\begin{tabular}{lll}
\toprule
\textbf{State} & \textbf{Online update} & \textbf{Use after streaming} \\
\midrule
$M$ & $M\leftarrow\max(M,\max a)$ & Stable common logit origin \\
$L$ & $L\leftarrow\alpha L+\sum_jp_j$ & Real-key softmax mass \\
$Q_2$ & $Q_2\leftarrow\alpha^2Q_2+\sum_jp_j^2$ & $n_{\rm eff}=L^2/Q_2$ \\
$S$ & $S\leftarrow\alpha S+\sum_jp_jv_j$ & Direction numerator \\
\bottomrule
\end{tabular}
\caption{Streaming state for one Polar query row, where $a$ is the current logit block and $p_j=\exp(a_j-M_{\rm new})$. The squared accumulator must rescale by $\alpha^2$.}
\label{table:polar_streaming_state}
\end{table}

After the real keys are consumed, the virtual null is folded in as one additional softmax entry. Let $p_0=\exp(t_i\nu_i-M)$ and $Z=L+p_0$. The kernel obtains
\begin{equation}
c_i=\operatorname{normalize}(S+p_0v_{\rm null}),\qquad
n_{{\rm eff},i}=\frac{L^2}{Q_2},\qquad
m_{{\rm eff},i}=n_{{\rm eff},i}\frac{L}{Z}.
\end{equation}
The common factor $1/Z$ cancels under radial normalization of the direction numerator. The production kernel evaluates the magnitude with $2\sigma(2x)-1=\tanh x$ and saves $(M,L,Q_2,S)$ for the backward pass. Listing~\ref{lst:polar_triton} gives the algorithmic core; pointer arithmetic, launch metadata, and backward code are omitted.

\begin{lstlisting}[language=Python,caption={Schematic core of the block-streamed Triton forward pass.},label={lst:polar_triton}]
@triton.jit
def polar_fwd_core(q, K, V, n_i, temp, nullv, beta,
                   BLOCK_N: tl.constexpr, DK: tl.constexpr):
    M  = tl.full([BLOCK_M], -1e38, tl.float32)
    L  = tl.zeros([BLOCK_M], tl.float32)
    Q2 = tl.zeros([BLOCK_M], tl.float32)
    S  = tl.zeros([BLOCK_M, DK], tl.float32)

    for start in range(0, n_i, BLOCK_N):
        k, v, valid = load_kv_block(K, V, start, n_i)
        a = tl.where(valid, temp[:, None] * dot(q, k), -1e38)
        M_new = tl.maximum(M, tl.max(a, axis=1))
        alpha = tl.exp(M - M_new)
        p = tl.where(valid, tl.exp(a - M_new[:, None]), 0.0)
        L  = alpha * L + tl.sum(p, axis=1)
        Q2 = alpha * alpha * Q2 + tl.sum(p * p, axis=1)
        S  = alpha[:, None] * S + tl.dot(p, v)
        M = M_new

    # Fold in the learned virtual null as one extra entry.
    M_new = tl.maximum(M, temp * nullv)
    alpha = tl.exp(M - M_new)
    L, Q2, S = alpha * L, alpha * alpha * Q2, alpha[:, None] * S
    p_null = tl.exp(temp * nullv - M_new)
    Z = L + p_null

    direction = normalize(S + p_null[:, None] * v_null)
    n_eff = L * L / tl.maximum(Q2, eps)
    m_eff = n_eff * L / tl.maximum(Z, eps)
    magnitude = 2 * tl.sigmoid(2 * beta * tl.log(1 + m_eff)) - 1
    save_for_backward(M_new, L, Q2, S)
\end{lstlisting}

This removes the quadratic \emph{score-matrix storage}. Full-context prefill still performs quadratic query-key arithmetic. The kernel is cross-checked against the materialized PyTorch oracle in FP32 accumulation before lower-precision outputs are accepted.

\section{Evaluation Protocols}
\label{appendix:protocol}

Stage I contains 120 approximately 370--378M-parameter runs trained for 1,900 steps of 524,288 tokens at sequence length 2,048 on FineWeb-Edu. Stage II contains three matched attention models trained for 18,722 such steps (9.816B tokens). The five promoted checkpoints and ten open-weight controls are evaluated with checkpoint-exact forward paths. Stage I cells vary factorial settings and are not independent seed replicates.

\subsection{Raven optimizer control}
The Raven rows use AdamW optimizer because the attempted matched-schedule control did not produce a usable baseline. In the 1B-token Raven Native pilot with the Atma Muon schedule, validation loss decreased from 10.83 to 3.76 nats by step 625, then rose to 6.88 at step 1,125 and ended at 4.62; teacher-forced retrieval was 0\% at every tested length from 2K through 64K. The reported Raven Native and Atma-Raven-Titans checkpoints therefore use AdamW and remain a separate model-family/optimizer group. This is evidence about the tested Muon schedule, not a general claim that Raven cannot converge under any Muon tuning.

Retrieval crosses passkey and NIAH tasks, synthetic and FinePDFs haystacks, eight lengths from 2K through 256K, three depths, and 50 paired trials per cell. The scorer appends the complete five-token target and evaluates each position conditioned on the preceding ground-truth target tokens. Token accuracy is the fraction of correct per-position argmax predictions; exact accuracy requires all five positions to be correct. Both are teacher-forced diagnostics, not autoregressive generation. An older free-generation protocol was retired because all base checkpoints scored uniformly zero. BABILong uses answer-only fine-tuning on tasks qa1--qa10 at 0K/1K/2K and greedy exact match on ten held-out rows per task and length. Base-task controls use LAMBADA, HellaSwag, PIQA, WinoGrande, ARC-Easy, ARC-Challenge, OpenBookQA, and BoolQ. Fixed-target BPB uses FinePDFs, PG-19, and Proof-Pile. All reported processes completed without failed, OOM, unsupported, missing, or null cells.

For proportions we report simple Wilson intervals where they aid interpretation. These intervals treat trials as binomial observations and do not exploit the paired design or account for correlations induced by shared templates. The archived retrieval payloads retain aggregate cell counts but the BABILong payloads do not retain per-model correctness vectors, so paired bootstrap or McNemar tests cannot be reconstructed without rerunning or recovering the original predictions.

\begin{landscape}
\section{Complete Factorial Sweep}
\label{appendix:full_ablation}

Table~\ref{table:ablation_appendix} retains all 120 factorial cells; the main paper reports only the recipe transitions needed to justify promotion.
\begingroup
\normalsize
\setlength{\tabcolsep}{5pt}
\begin{longtable}{llcccrrrrrrrr}
\caption{Complete Results of the 120-Run Ablation Sweep. All models are 370M parameters, trained on FineWeb-Edu for 1B tokens ($seq\_len=2048$). Evaluated from 2K to 64K context length. Clean NLL is measured on coherent FinePDFs documents; junk NLL is measured on the concatenated validation stream. NLL is in nats/token (lower is better), Needle is greedy per-digit accuracy in \% (higher is better), and MFU is training model flops utilization in \%. \label{table:ablation_appendix}}\\
\toprule
Type & Reg & Dist & Mem & Win & Val & Clean2K & Clean64K & Junk2K & Junk64K & Ndl2K & Ndl64K & MFU \\
\midrule
\endfirsthead
\caption[]{Complete Results of the 120-Run Ablation Sweep (continued).}\\
\toprule
Type & Reg & Dist & Mem & Win & Val & Clean2K & Clean64K & Junk2K & Junk64K & Ndl2K & Ndl64K & MFU \\
\midrule
\endhead
\midrule
\multicolumn{13}{r}{{Continued on next page}} \\
\bottomrule
\endfoot
\bottomrule
\endlastfoot
POLAR & baseline & Off & Off & Off & 3.323 & 2.83 & 3.60 & 3.27 & 5.68 & 96.3 & 0.0 & 40.1 \\
POLAR & baseline & Off & Off & On & 3.343 & 2.83 & 2.66 & 3.30 & 4.74 & 67.5 & 0.0 & 39.6 \\
POLAR & baseline & Off & On & Off & 3.169 & 2.70 & 1.96 & 3.11 & 3.13 & 91.3 & 92.5 & 35.6 \\
POLAR & baseline & Off & On & On & 3.174 & 2.71 & 2.01 & 3.12 & 3.13 & 51.3 & 30.0 & 35.6 \\
POLAR & baseline & On & Off & Off & 3.332 & 2.81 & 5.56 & 3.28 & 6.54 & 85.0 & 0.0 & 33.3 \\
POLAR & baseline & On & Off & On & 3.361 & 2.85 & 2.89 & 3.35 & 4.97 & 91.3 & 2.5 & 34.5 \\
POLAR & baseline & On & On & Off & 3.178 & 2.72 & 1.98 & 3.12 & 3.14 & 73.8 & 58.8 & 31.8 \\
POLAR & baseline & On & On & On & 3.182 & 2.73 & 1.97 & 3.12 & 3.14 & 56.3 & 21.3 & 31.2 \\
POLAR & weak & Off & Off & Off & 3.333 & 2.82 & 2.86 & 3.28 & 5.35 & 88.8 & 0.0 & 39.0 \\
POLAR & weak & Off & Off & On & 3.348 & 2.90 & 3.76 & 3.38 & 5.24 & 93.8 & 0.0 & 37.9 \\
POLAR & weak & Off & On & Off & 3.179 & 2.74 & 1.95 & 3.12 & 3.14 & 70.0 & 51.3 & 36.2 \\
POLAR & weak & Off & On & On & 3.187 & 2.74 & 1.99 & 3.13 & 3.15 & 41.3 & 41.3 & 36.2 \\
POLAR & weak & On & Off & Off & 3.335 & 2.82 & 3.47 & 3.28 & 5.56 & 91.3 & 0.0 & 32.8 \\
POLAR & weak & On & Off & On & 3.353 & 2.82 & 2.58 & 3.32 & 4.81 & 96.3 & 1.3 & 32.8 \\
POLAR & weak & On & On & Off & 3.178 & 2.74 & 1.95 & 3.12 & 3.14 & 33.8 & 32.5 & 30.7 \\
POLAR & weak & On & On & On & 3.183 & 2.74 & 1.94 & 3.13 & 3.15 & 81.3 & 78.8 & 30.7 \\
POLAR & strong & Off & Off & Off & 3.338 & 2.85 & 3.79 & 3.28 & 5.83 & 98.8 & 1.3 & 37.3 \\
POLAR & strong & Off & Off & On & 3.360 & 2.85 & 2.99 & 3.32 & 5.01 & 91.3 & 0.0 & 37.9 \\
POLAR & strong & Off & On & Off & 3.172 & 2.72 & 1.96 & 3.11 & 3.13 & 80.0 & 52.5 & 35.8 \\
POLAR & strong & Off & On & On & 3.181 & 2.72 & 1.94 & 3.12 & 3.15 & 76.3 & 68.8 & 34.6 \\
POLAR & strong & On & Off & Off & 3.337 & 2.87 & 3.22 & 3.28 & 5.52 & 83.8 & 1.3 & 32.8 \\
POLAR & strong & On & Off & On & 3.347 & 2.82 & 2.64 & 3.30 & 4.88 & 97.5 & 16.3 & 32.2 \\
POLAR & strong & On & On & Off & 3.179 & 2.73 & 1.94 & 3.12 & 3.14 & 86.3 & 83.8 & 30.9 \\
POLAR & strong & On & On & On & 3.182 & 2.72 & 1.94 & 3.12 & 3.14 & 25.0 & 23.8 & 30.9 \\
POLAR & discrete & Off & Off & Off & 3.338 & 2.82 & 2.80 & 3.29 & 5.12 & 93.8 & 2.5 & 37.9 \\
POLAR & discrete & Off & Off & On & 3.349 & 2.87 & 2.90 & 3.33 & 5.25 & 93.8 & 6.3 & 37.9 \\
POLAR & discrete & Off & On & Off & 3.184 & 2.75 & 2.03 & 3.12 & 3.14 & 76.3 & 73.8 & 36.4 \\
POLAR & discrete & Off & On & On & 3.178 & 2.71 & 1.92 & 3.12 & 3.15 & 31.3 & 18.8 & 35.8 \\
POLAR & discrete & On & Off & Off & 3.333 & 2.80 & 4.13 & 3.28 & 6.12 & 97.5 & 0.0 & 32.6 \\
POLAR & discrete & On & Off & On & 3.351 & 2.87 & 2.72 & 3.33 & 4.89 & 85.0 & 2.5 & 33.3 \\
POLAR & discrete & On & On & Off & 3.189 & 2.74 & 1.96 & 3.13 & 3.15 & 87.5 & 53.8 & 31.8 \\
POLAR & discrete & On & On & On & 3.192 & 2.74 & 2.07 & 3.13 & 3.15 & 21.3 & 26.3 & 31.7 \\
POLAR & zipfian & Off & Off & Off & 3.338 & 2.85 & 4.83 & 3.28 & 6.55 & 96.3 & 11.3 & 37.9 \\
POLAR & zipfian & Off & Off & On & 3.350 & 2.83 & 2.58 & 3.32 & 4.63 & 88.8 & 3.8 & 39.4 \\
POLAR & zipfian & Off & On & Off & 3.171 & 2.70 & 1.92 & 3.11 & 3.13 & 52.5 & 63.8 & 36.2 \\
POLAR & zipfian & Off & On & On & 3.180 & 2.75 & 1.95 & 3.12 & 3.15 & 46.3 & 40.0 & 35.2 \\
POLAR & zipfian & On & Off & Off & 3.340 & 2.83 & 3.77 & 3.28 & 5.64 & 98.8 & 1.3 & 33.8 \\
POLAR & zipfian & On & Off & On & 3.352 & 2.85 & 3.13 & 3.36 & 5.23 & 90.0 & 0.0 & 33.5 \\
POLAR & zipfian & On & On & Off & 3.177 & 2.72 & 1.94 & 3.12 & 3.14 & 91.3 & 67.5 & 31.8 \\
POLAR & zipfian & On & On & On & 3.192 & 2.74 & 2.00 & 3.13 & 3.15 & 40.0 & 26.3 & 31.8 \\
NOPE & baseline & Off & Off & Off & 3.224 & 2.76 & 3.71 & 3.16 & 6.68 & 97.5 & 1.3 & 41.5 \\
NOPE & baseline & Off & Off & On & 3.219 & 2.75 & 2.76 & 3.17 & 5.49 & 92.5 & 10.0 & 36.5 \\
NOPE & baseline & Off & On & Off & 3.140 & 2.66 & 2.34 & 3.09 & 3.23 & 97.5 & 16.3 & 36.8 \\
NOPE & baseline & Off & On & On & 3.150 & 2.70 & 2.14 & 3.09 & 3.24 & 81.3 & 0.0 & 34.2 \\
NOPE & baseline & On & Off & Off & 3.209 & 2.74 & 2.99 & 3.15 & 6.13 & 90.0 & 18.8 & 32.5 \\
NOPE & baseline & On & Off & On & 3.212 & 2.77 & 2.90 & 3.17 & 5.65 & 81.3 & 0.0 & 30.3 \\
NOPE & baseline & On & On & Off & 3.149 & 2.67 & 2.35 & 3.10 & 3.28 & 80.0 & 16.3 & 30.0 \\
NOPE & baseline & On & On & On & 3.147 & 2.68 & 2.09 & 3.09 & 3.23 & 88.8 & 21.3 & 28.2 \\
NOPE & weak & Off & Off & Off & 3.214 & 2.77 & 3.10 & 3.16 & 6.34 & 95.0 & 0.0 & 40.9 \\
NOPE & weak & Off & Off & On & 3.222 & 2.75 & 3.05 & 3.19 & 5.81 & 81.3 & 0.0 & 36.1 \\
NOPE & weak & Off & On & Off & 3.147 & 2.68 & 2.07 & 3.09 & 3.19 & 88.8 & 21.3 & 37.5 \\
NOPE & weak & Off & On & On & 3.142 & 2.68 & 2.12 & 3.08 & 3.23 & 82.5 & 0.0 & 33.7 \\
NOPE & weak & On & Off & Off & 3.215 & 2.75 & 4.87 & 3.16 & 12.48 & 95.0 & 5.0 & 32.0 \\
NOPE & weak & On & Off & On & 3.221 & 2.73 & 2.63 & 3.18 & 4.84 & 81.3 & 0.0 & 29.3 \\
NOPE & weak & On & On & Off & 3.147 & 2.67 & 2.48 & 3.09 & 3.23 & 97.5 & 1.3 & 30.8 \\
NOPE & weak & On & On & On & 3.148 & 2.68 & 2.14 & 3.09 & 3.24 & 93.8 & 1.3 & 28.4 \\
NOPE & strong & Off & Off & Off & 3.231 & 2.74 & 4.88 & 3.18 & 11.85 & 90.0 & 2.5 & 38.4 \\
NOPE & strong & Off & Off & On & 3.214 & 2.72 & 2.68 & 3.17 & 5.28 & 82.5 & 0.0 & 36.4 \\
NOPE & strong & Off & On & Off & 3.144 & 2.68 & 2.37 & 3.09 & 3.24 & 96.3 & 6.3 & 35.7 \\
NOPE & strong & Off & On & On & 3.150 & 2.70 & 2.10 & 3.10 & 3.25 & 93.8 & 7.5 & 33.8 \\
NOPE & strong & On & Off & Off & 3.207 & 2.72 & 3.13 & 3.15 & 6.20 & 73.8 & 0.0 & 31.4 \\
NOPE & strong & On & Off & On & 3.214 & 2.74 & 2.49 & 3.17 & 4.36 & 86.3 & 8.8 & 28.9 \\
NOPE & strong & On & On & Off & 3.146 & 2.67 & 2.24 & 3.09 & 3.23 & 83.8 & 17.5 & 29.1 \\
NOPE & strong & On & On & On & 3.147 & 2.68 & 2.16 & 3.09 & 3.28 & 96.3 & 0.0 & 27.5 \\
NOPE & discrete & Off & Off & Off & 3.202 & 2.75 & 3.33 & 3.15 & 6.23 & 85.0 & 1.3 & 38.8 \\
NOPE & discrete & Off & Off & On & 3.234 & 2.79 & 3.11 & 3.19 & 5.47 & 70.0 & 18.8 & 37.0 \\
NOPE & discrete & Off & On & Off & 3.145 & 2.68 & 2.14 & 3.09 & 3.27 & 98.8 & 16.3 & 36.0 \\
NOPE & discrete & Off & On & On & 3.145 & 2.68 & 2.10 & 3.09 & 3.25 & 95.0 & 3.8 & 34.3 \\
NOPE & discrete & On & Off & Off & 3.212 & 2.88 & 3.04 & 3.16 & 6.88 & 42.5 & 0.0 & 31.2 \\
NOPE & discrete & On & Off & On & 3.211 & 2.78 & 2.73 & 3.16 & 5.28 & 82.5 & 1.3 & 29.1 \\
NOPE & discrete & On & On & Off & 3.145 & 2.68 & 2.27 & 3.08 & 3.26 & 91.3 & 11.3 & 30.3 \\
NOPE & discrete & On & On & On & 3.149 & 2.68 & 2.17 & 3.08 & 3.32 & 83.8 & 0.0 & 28.6 \\
NOPE & zipfian & Off & Off & Off & 3.209 & 2.71 & 3.18 & 3.15 & 6.40 & 90.0 & 0.0 & 40.8 \\
NOPE & zipfian & Off & Off & On & 3.219 & 2.75 & 2.71 & 3.17 & 5.44 & 77.5 & 1.3 & 36.1 \\
NOPE & zipfian & Off & On & Off & 3.141 & 2.67 & 2.55 & 3.08 & 3.38 & 96.3 & 0.0 & 38.0 \\
NOPE & zipfian & Off & On & On & 3.150 & 2.69 & 2.11 & 3.09 & 3.25 & 90.0 & 3.8 & 33.7 \\
NOPE & zipfian & On & Off & Off & 3.207 & 2.77 & 3.32 & 3.15 & 7.09 & 82.5 & 0.0 & 31.3 \\
NOPE & zipfian & On & Off & On & 3.227 & 2.79 & 2.91 & 3.18 & 5.94 & 93.8 & 11.3 & 30.2 \\
NOPE & zipfian & On & On & Off & 3.146 & 2.68 & 2.39 & 3.10 & 3.25 & 96.3 & 21.3 & 29.6 \\
NOPE & zipfian & On & On & On & 3.148 & 2.68 & 2.17 & 3.09 & 3.28 & 90.0 & 6.3 & 28.7 \\
ROPE & baseline & Off & Off & Off & 3.159 & 2.85 & 3.36 & 3.09 & 5.21 & 42.5 & 0.0 & 40.4 \\
ROPE & baseline & Off & Off & On & 3.163 & 2.86 & 3.79 & 3.17 & 5.86 & 15.0 & 0.0 & 37.2 \\
ROPE & baseline & Off & On & Off & 3.145 & 2.81 & 2.86 & 3.09 & 3.57 & 73.8 & 0.0 & 37.1 \\
ROPE & baseline & Off & On & On & 3.151 & 2.80 & 2.95 & 3.12 & 3.72 & 13.8 & 0.0 & 34.4 \\
ROPE & baseline & On & Off & Off & 3.163 & 3.00 & 3.47 & 3.10 & 5.46 & 38.8 & 0.0 & 33.2 \\
ROPE & baseline & On & Off & On & 3.167 & 2.88 & 4.06 & 3.16 & 5.92 & 0.0 & 0.0 & 30.2 \\
ROPE & baseline & On & On & Off & 3.153 & 2.73 & 2.45 & 3.09 & 3.57 & 75.0 & 0.0 & 31.4 \\
ROPE & baseline & On & On & On & 3.154 & 2.83 & 2.86 & 3.12 & 3.72 & 32.5 & 0.0 & 29.4 \\
ROPE & weak & Off & Off & Off & 3.158 & 2.88 & 3.46 & 3.10 & 5.36 & 46.3 & 0.0 & 41.4 \\
ROPE & weak & Off & Off & On & 3.163 & 2.84 & 3.96 & 3.18 & 5.93 & 12.5 & 0.0 & 37.8 \\
ROPE & weak & Off & On & Off & 3.149 & 2.71 & 2.46 & 3.09 & 3.56 & 37.5 & 0.0 & 36.5 \\
ROPE & weak & Off & On & On & 3.156 & 2.85 & 2.77 & 3.12 & 3.60 & 23.8 & 0.0 & 34.9 \\
ROPE & weak & On & Off & Off & 3.155 & 2.83 & 3.49 & 3.10 & 5.27 & 38.8 & 0.0 & 32.3 \\
ROPE & weak & On & Off & On & 3.157 & 3.05 & 4.12 & 3.18 & 5.86 & 8.8 & 0.0 & 30.5 \\
ROPE & weak & On & On & Off & 3.149 & 2.77 & 2.76 & 3.09 & 3.56 & 57.5 & 0.0 & 31.1 \\
ROPE & weak & On & On & On & 3.149 & 2.77 & 3.13 & 3.12 & 3.80 & 17.5 & 0.0 & 28.0 \\
ROPE & strong & Off & Off & Off & 3.165 & 2.78 & 3.34 & 3.10 & 5.27 & 30.0 & 0.0 & 39.0 \\
ROPE & strong & Off & Off & On & 3.169 & 2.90 & 3.68 & 3.19 & 5.80 & 31.3 & 0.0 & 36.3 \\
ROPE & strong & Off & On & Off & 3.158 & 2.85 & 2.63 & 3.10 & 3.50 & 51.3 & 0.0 & 35.9 \\
ROPE & strong & Off & On & On & 3.162 & 2.86 & 3.08 & 3.14 & 3.76 & 10.0 & 0.0 & 33.2 \\
ROPE & strong & On & Off & Off & 3.161 & 2.85 & 3.58 & 3.10 & 5.28 & 47.5 & 0.0 & 31.1 \\
ROPE & strong & On & Off & On & 3.165 & 2.85 & 4.03 & 3.19 & 5.90 & 2.5 & 0.0 & 29.2 \\
ROPE & strong & On & On & Off & 3.155 & 2.80 & 2.70 & 3.10 & 3.62 & 61.3 & 0.0 & 29.3 \\
ROPE & strong & On & On & On & 3.162 & 2.85 & 3.08 & 3.13 & 3.74 & 2.5 & 0.0 & 28.4 \\
ROPE & discrete & Off & Off & Off & 3.165 & 2.91 & 3.43 & 3.11 & 5.42 & 22.5 & 0.0 & 39.7 \\
ROPE & discrete & Off & Off & On & 3.158 & 2.84 & 4.05 & 3.15 & 5.90 & 3.8 & 0.0 & 36.9 \\
ROPE & discrete & Off & On & Off & 3.159 & 2.83 & 2.67 & 3.11 & 3.56 & 47.5 & 0.0 & 36.3 \\
ROPE & discrete & Off & On & On & 3.161 & 2.79 & 2.71 & 3.13 & 3.74 & 41.3 & 0.0 & 33.7 \\
ROPE & discrete & On & Off & Off & 3.171 & 2.79 & 3.45 & 3.12 & 5.36 & 51.3 & 0.0 & 31.5 \\
ROPE & discrete & On & Off & On & 3.164 & 2.80 & 3.74 & 3.16 & 5.86 & 21.3 & 0.0 & 30.0 \\
ROPE & discrete & On & On & Off & 3.162 & 2.74 & 2.47 & 3.11 & 3.53 & 50.0 & 0.0 & 30.9 \\
ROPE & discrete & On & On & On & 3.154 & 2.79 & 2.68 & 3.11 & 3.67 & 18.8 & 0.0 & 28.9 \\
ROPE & zipfian & Off & Off & Off & 3.163 & 2.82 & 3.25 & 3.10 & 5.22 & 62.5 & 0.0 & 39.8 \\
ROPE & zipfian & Off & Off & On & 3.161 & 2.94 & 3.94 & 3.18 & 5.81 & 16.3 & 0.0 & 37.1 \\
ROPE & zipfian & Off & On & Off & 3.149 & 2.78 & 2.70 & 3.09 & 3.63 & 46.3 & 0.0 & 36.7 \\
ROPE & zipfian & Off & On & On & 3.151 & 2.80 & 2.63 & 3.11 & 3.68 & 2.5 & 0.0 & 33.9 \\
ROPE & zipfian & On & Off & Off & 3.161 & 2.78 & 3.38 & 3.10 & 5.32 & 47.5 & 0.0 & 32.6 \\
ROPE & zipfian & On & Off & On & 3.160 & 3.01 & 4.11 & 3.17 & 5.81 & 17.5 & 0.0 & 30.2 \\
ROPE & zipfian & On & On & Off & 3.149 & 2.73 & 2.39 & 3.09 & 3.55 & 62.5 & 0.0 & 29.8 \\
ROPE & zipfian & On & On & On & 3.152 & 2.79 & 2.99 & 3.12 & 3.78 & 26.3 & 0.0 & 28.5 \\
\end{longtable}

\endgroup
\end{landscape}

\section{Retrieval Breakdown}
\label{appendix:retrieval}

\begin{figure}[h]
\centering
\includegraphics[width=\linewidth]{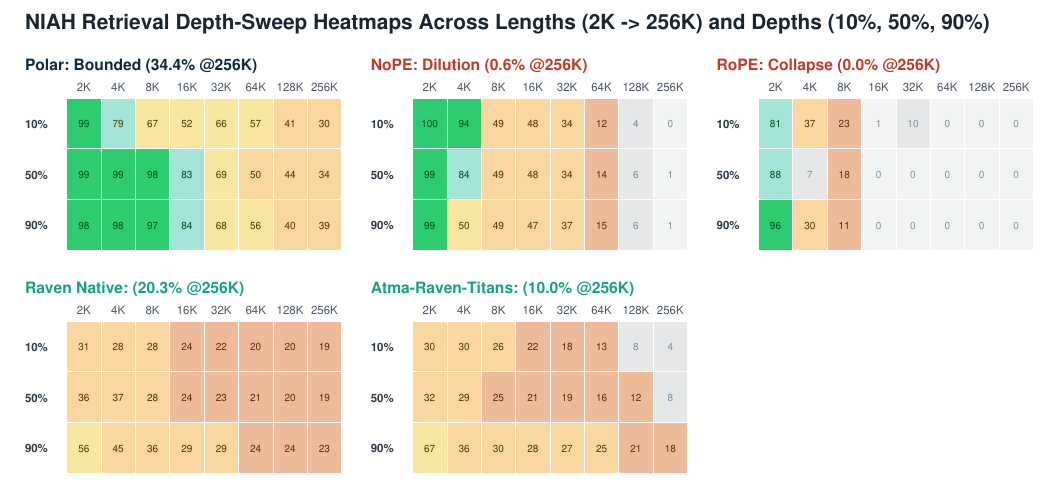}
\caption{Retrieval depth sweeps for the five promoted models. Each cell is teacher-forced token accuracy.}
\label{fig:retrieval_depth_sweeps}
\end{figure}

\begin{table}[h]
\centering
\small
\resizebox{\linewidth}{!}{%
\begin{tabular}{llrrrrrrrr}
\toprule
\textbf{Model} & \textbf{Haystack} & \textbf{2K} & \textbf{4K} & \textbf{8K} & \textbf{16K} & \textbf{32K} & \textbf{64K} & \textbf{128K} & \textbf{256K} \\
\midrule
Polar & Synthetic & 99.6 & 98.4 & 98.5 & 97.0 & 94.9 & 92.3 & 80.3 & 68.4 \\
& FinePDFs & 98.6 & 86.3 & 77.9 & 49.1 & 40.7 & 16.3 & 3.2 & 0.3 \\
NoPE & Synthetic & 99.5 & 98.7 & 98.9 & 94.9 & 70.3 & 26.5 & 10.2 & 1.2 \\
& FinePDFs & 99.3 & 52.9 & 0.0 & 0.0 & 0.0 & 0.0 & 0.0 & 0.0 \\
RoPE & Synthetic & 81.7 & 44.9 & 33.1 & 0.8 & 6.8 & 0.1 & 0.0 & 0.0 \\
& FinePDFs & 94.8 & 4.7 & 0.2 & 0.0 & 0.0 & 0.0 & 0.0 & 0.0 \\
Atma-Raven-Titans & Synthetic & 64.8 & 60.7 & 51.1 & 46.9 & 41.7 & 35.8 & 25.9 & 19.3 \\
& FinePDFs & 23.2 & 2.6 & 0.7 & 0.2 & 1.4 & 0.5 & 1.6 & 0.7 \\
Raven Native & Synthetic & 55.9 & 57.7 & 51.4 & 48.3 & 44.3 & 40.9 & 38.9 & 35.1 \\
& FinePDFs & 22.1 & 15.4 & 8.2 & 3.3 & 5.3 & 2.9 & 3.7 & 5.4 \\
\bottomrule
\end{tabular}}
\caption{Zero-shot retrieval accuracy (\%) by haystack type and context length, averaged over needle depths.}
\label{table:retrieval_breakdown}
\end{table}

\begin{table}[h]
\centering
\small
\setlength{\tabcolsep}{4pt}
\resizebox{\linewidth}{!}{%
\begin{tabular}{llrrrrrr}
\toprule
\textbf{Model} & \textbf{Haystack} & \multicolumn{3}{c}{\textbf{Token accuracy (\%)}} & \multicolumn{3}{c}{\textbf{Exact five-token accuracy (\%)}} \\
\cmidrule(lr){3-5}\cmidrule(lr){6-8}
& & \textbf{2K} & \textbf{64K} & \textbf{256K} & \textbf{2K} & \textbf{64K} & \textbf{256K} \\
\midrule
Polar & Synthetic & 99.6 & 92.3 & 68.4 & 96.3 & 65.7 & 18.0 \\
& FinePDFs & 98.6 & 16.3 & 0.3 & 93.0 & 0.0 & 0.0 \\
NoPE & Synthetic & 99.5 & 26.5 & 1.2 & 95.3 & 4.7 & 0.0 \\
& FinePDFs & 99.3 & 0.0 & 0.0 & 96.7 & 0.0 & 0.0 \\
RoPE & Synthetic & 81.7 & 0.1 & 0.0 & 42.3 & 0.0 & 0.0 \\
& FinePDFs & 94.8 & 0.0 & 0.0 & 77.3 & 0.0 & 0.0 \\
Atma-Raven-Titans & Synthetic & 64.8 & 35.8 & 19.3 & 8.0 & 0.3 & 0.0 \\
& FinePDFs & 23.2 & 0.5 & 0.7 & 4.7 & 0.0 & 0.0 \\
Raven Native & Synthetic & 55.9 & 40.9 & 35.1 & 2.0 & 0.0 & 0.0 \\
& FinePDFs & 22.1 & 2.9 & 5.4 & 0.0 & 0.0 & 0.0 \\
\bottomrule
\end{tabular}}
\caption{Teacher-forced retrieval under the two archived metrics. Token accuracy can expose partial target signal, whereas exact accuracy requires the entire five-token value to be predicted correctly under teacher forcing. Each haystack--length entry averages 300 sequences (two tasks, three depths, 50 trials).}
\label{table:retrieval_exact_breakdown}
\end{table}

Figure~\ref{fig:retrieval_depth_sweeps} exposes depth-wise variation. Tables~\ref{table:retrieval_breakdown} and~\ref{table:retrieval_exact_breakdown} separate partial target signal from exact five-token success. For Polar, the 64K exact rate is 65.7\% on synthetic contexts (95\% Wilson interval 60.1--70.8\%) and 0/300 on FinePDFs (upper Wilson endpoint 1.3\%). At 256K the corresponding exact rates are 18.0\% (14.1--22.7\%) and 0/300. These results support exact synthetic extrapolation and teacher-forced real-text signal, but not exact real-text retrieval.

\subsection{Open-weight context}
Table~\ref{table:open_weight_controls} reports the ten archived open-weight controls. They differ in training data, tokenizer, architecture, parameter count, and native context length; several were trained at lengths beyond 2K. They are therefore practical context, not matched length-extrapolation baselines and not evidence of a Polar advantage over the corresponding model families.

\begin{table}[h]
\centering
\scriptsize
\setlength{\tabcolsep}{3.4pt}
\begin{tabular}{lrrrr}
\toprule
\textbf{Open-weight model} & \textbf{NLL 2K} & \textbf{NLL 64K} & \textbf{Needle 2K} & \textbf{Needle 64K} \\
\midrule
SmolLM2-360M & 1.99 & 6.04 & 100.0 & 8.1 \\
LFM2.5-230M & 3.01 & 6.69 & 38.1 & 19.4 \\
LFM2.5-350M & 3.18 & 6.63 & 45.6 & 46.3 \\
Qwen2.5-0.5B & 2.03 & 1.53 & 100.0 & 38.1 \\
Qwen3-0.6B-Base & 1.95 & 1.40 & 100.0 & 95.0 \\
Qwen3.5-0.8B-Base & 1.94 & 1.33 & 100.0 & 100.0 \\
Gemma-3-270M & 2.48 & 1.90 & 96.2 & 24.4 \\
Granite-4.0-350M & 2.52 & 3.02 & 40.6 & 8.8 \\
Granite-4.0-H-350M & 2.22 & 2.06 & 75.0 & 54.4 \\
Falcon-H1-0.5B & 1.99 & 2.54 & 92.5 & 19.4 \\
\bottomrule
\end{tabular}
\caption{Open-weight controls evaluated through 64K. NLL is fixed-target FinePDFs nats/token; Needle is teacher-forced target-token accuracy over 16 trials. These checkpoints are not matched for data, tokenizer, native context, or training budget.}
\label{table:open_weight_controls}
\end{table}

\section{Short-Context Quality and Systems}
\label{appendix:quality}

\begin{table}[!ht]
\centering
\scriptsize
\setlength{\tabcolsep}{2.4pt}
\resizebox{\linewidth}{!}{%
\begin{tabular}{llrrrrrrrrr}
\toprule
\textbf{Group} & \textbf{Model} & \textbf{LAMBADA} & \textbf{HellaSwag} & \textbf{PIQA} & \textbf{WinoGrande} & \textbf{ARC-E} & \textbf{ARC-C} & \textbf{OBQA} & \textbf{BoolQ} & \textbf{Mean} \\
\midrule
Matched & NoPE & 30.16 & \textbf{37.88} & 66.27 & \textbf{51.93} & \textbf{50.53} & \textbf{32.44} & 31.60 & 60.37 & \textbf{45.15} \\
Matched & RoPE & \textbf{30.25} & 37.19 & 66.76 & 50.67 & 49.12 & 29.43 & \textbf{32.40} & \textbf{60.95} & 44.60 \\
Matched & Polar & 28.47 & 36.19 & \textbf{66.87} & 51.62 & 49.12 & 26.09 & 31.80 & 56.21 & 43.29 \\
\midrule
Raven/AdamW & Atma-Raven-Titans & 26.00 & \textbf{34.12} & \textbf{65.56} & \textbf{52.49} & 48.07 & \textbf{27.76} & \textbf{29.00} & 61.41 & \textbf{43.05} \\
Raven/AdamW & Raven Native & \textbf{27.48} & 33.56 & 64.69 & 51.70 & \textbf{49.12} & 27.09 & 28.80 & \textbf{61.93} & \textbf{43.05} \\
\bottomrule
\end{tabular}}
\caption{Task-level zero-shot accuracy (\%) on the eight short-context controls (22,939 examples per model). LAMBADA, WinoGrande, and BoolQ use standard accuracy; HellaSwag, PIQA, ARC-Easy, ARC-Challenge, and OpenBookQA use length-normalized accuracy. Mean is the unweighted average of the eight displayed metrics; bold is best within each comparison group.}
\label{table:base_tasks_breakdown}
\end{table}

\begin{figure}[h]
\centering
\includegraphics[width=\linewidth]{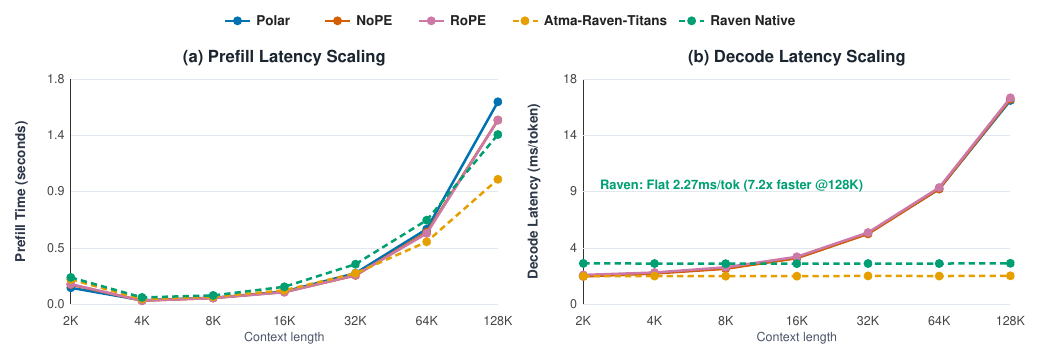}
\caption{Single-sequence serving on one L40S. Attention decode scales with KV-cache length; recurrent Raven uses fixed state.}
\label{fig:serving_scaling}
\end{figure}

\begin{table}[h]
\centering
\small
\resizebox{\linewidth}{!}{%
\begin{tabular}{llrrrrr}
\toprule
\textbf{Group} & \textbf{Model} & \textbf{Base mean} & \textbf{128K prefill} & \textbf{Decode} & \textbf{Peak alloc.} & \textbf{Train MFU} \\
& & (\%) & (s) & (ms/token) & (GiB) & (\%, 2K) \\
\midrule
Matched & NoPE & 45.1 & 1.47 & 16.4 & 39.40 & 42.77 \\
Matched & Polar & 43.3 & 1.62 & 16.3 & 39.41 & 40.44 \\
Matched & RoPE & 44.6 & 1.47 & 16.5 & 39.41 & 42.45 \\
Raven/AdamW & Atma-Raven-Titans & 43.1 & 1.00 & 2.27 & 8.62 & 30.92 \\
Raven/AdamW & Raven Native & 43.0 & 1.36 & 3.27 & 6.46 & 21.53 \\
\bottomrule
\end{tabular}}
\caption{Short-context quality and serving trade-offs. Serving uses one sequence and one timing sample; it is descriptive rather than an uncertainty estimate.}
\label{table:quality_systems_summary}
\end{table}

Table~\ref{table:base_tasks_breakdown} gives the exact per-task controls shown in main-text Figure~\ref{fig:short_context_controls}. Figure~\ref{fig:serving_scaling} shows the serving-length trend, and Table~\ref{table:quality_systems_summary} summarizes the quality--systems trade-off.

\section{Systems Implementation and Optimization Breakdown}
\label{appendix:systems}

The inference stack uses separate kernels for full-context prefill, paged autoregressive decode, and recurrent-state updates. Table~\ref{table:kernel_breakdown} distinguishes what each optimization changes. In particular, streaming removes quadratic score storage but not quadratic prefill arithmetic, while the recurrent Raven path changes the state representation itself and therefore has different scaling.

\begin{table}[!ht]
\centering
\small
\begin{tabular}{p{0.20\linewidth}p{0.34\linewidth}p{0.34\linewidth}}
\toprule
\textbf{Path} & \textbf{Optimization} & \textbf{Operational effect} \\
\midrule
Polar prefill & Block-streamed $K,V$ tiles with online $(M,L,Q_2,S)$ accumulation & Avoids materializing $T\times T$ scores; retains full-context $O(T^2)$ arithmetic. \\
Polar decode & Paged KV-cache kernel grouping the four query heads that share each KV head & Reuses KV loads across the GQA group; decode remains $O(T)$ in cache length. \\
Gated-delta step & Fused in-place decay, prediction, delta write, and readout & Avoids materializing per-step intermediate states and preserves $O(1)$ recurrent state size. \\
Ragged prefill & Packed prompts with grouped compatible tile shapes & Reduces padding work on heterogeneous prompt batches. \\
Compiler boundary & Recurrent scan and custom kernels exposed as opaque operations & Prevents compiler graph breaks around the stateful update while retaining explicit numerical tests. \\
\bottomrule
\end{tabular}
\caption{Implementation-level optimization breakdown. Complexity statements describe the sequence-mixing path and exclude shared MLP and embedding costs.}
\label{table:kernel_breakdown}
\end{table}

\begin{landscape}
\subsection{Single-sequence scaling}
Table~\ref{table:serving_detailed_matrix} gives the underlying single-sequence measurements. On one NVIDIA L40S, attention decode increases from approximately 2.3 ms/token at 2K to 16.3--16.5 ms/token at 128K because every step reads a longer KV cache. The recurrent models remain nearly flat because they update fixed-size state. Prefill is less monotonic at short lengths because launch overhead and kernel occupancy dominate before the matrix operations reach their efficient regime. These timings use batch size 1, 32 generated tokens, and one recorded timing sample per cell. They are descriptive measurements, not confidence intervals.
\begingroup
\small
\setlength{\tabcolsep}{4pt}
\begin{longtable}{llrrrrrrr}
\caption{Detailed prefill and decode scaling from 2K to 128K tokens on one NVIDIA L40S GPU (batch size 1, 32 generated tokens).}\label{table:serving_detailed_matrix}\\
\toprule
\textbf{Model} & \textbf{Metric} & \textbf{2K} & \textbf{4K} & \textbf{8K} & \textbf{16K} & \textbf{32K} & \textbf{64K} & \textbf{128K} \\
\midrule
\endfirsthead
\caption[]{Detailed prefill and decode scaling (continued).}\\
\toprule
\textbf{Model} & \textbf{Metric} & \textbf{2K} & \textbf{4K} & \textbf{8K} & \textbf{16K} & \textbf{32K} & \textbf{64K} & \textbf{128K} \\
\midrule
\endhead
\midrule
\multicolumn{9}{r}{Continued on next page}\\
\endfoot
\bottomrule
\endlastfoot
Polar & Prefill latency (s) & 0.131 & 0.031 & 0.051 & 0.101 & 0.248 & 0.601 & 1.619 \\
& Prefill throughput (tok/s) & 15,640 & 133,884 & 160,145 & 161,551 & 132,091 & 109,050 & 80,959 \\
& Decode latency (ms/tok) & 2.27 & 2.49 & 2.88 & 3.70 & 5.66 & 9.22 & 16.30 \\
& Decode throughput (tok/s) & 441.2 & 402.3 & 346.6 & 270.1 & 176.8 & 108.5 & 61.3 \\
\midrule
NoPE & Prefill latency (s) & 0.157 & 0.028 & 0.049 & 0.097 & 0.230 & 0.576 & 1.473 \\
& Prefill throughput (tok/s) & 13,070 & 146,568 & 166,504 & 168,510 & 142,319 & 113,810 & 89,013 \\
& Decode latency (ms/tok) & 2.22 & 2.45 & 2.83 & 3.68 & 5.62 & 9.22 & 16.40 \\
& Decode throughput (tok/s) & 450.1 & 408.8 & 353.0 & 271.7 & 177.8 & 108.4 & 61.0 \\
\midrule
RoPE & Prefill latency (s) & 0.160 & 0.028 & 0.049 & 0.096 & 0.232 & 0.566 & 1.472 \\
& Prefill throughput (tok/s) & 12,761 & 147,089 & 168,287 & 170,936 & 141,249 & 115,786 & 89,015 \\
& Decode latency (ms/tok) & 2.34 & 2.53 & 2.95 & 3.79 & 5.73 & 9.33 & 16.51 \\
& Decode throughput (tok/s) & 427.4 & 395.3 & 338.7 & 263.7 & 174.5 & 107.2 & 60.6 \\
\midrule
Atma-Raven-Titans & Prefill latency (s) & 0.198 & 0.041 & 0.059 & 0.111 & 0.247 & 0.499 & 0.999 \\
& Prefill throughput (tok/s) & 10,334 & 100,989 & 137,903 & 147,434 & 132,916 & 131,464 & 131,195 \\
& Decode latency (ms/tok) & 2.24 & 2.25 & 2.24 & 2.24 & 2.26 & 2.25 & 2.27 \\
& Decode throughput (tok/s) & 446.6 & 445.1 & 446.0 & 447.1 & 442.6 & 444.8 & 440.3 \\
\midrule
Raven Native & Prefill latency (s) & 0.214 & 0.054 & 0.070 & 0.139 & 0.320 & 0.672 & 1.357 \\
& Prefill throughput (tok/s) & 9,558 & 75,738 & 116,378 & 117,605 & 102,354 & 97,527 & 96,559 \\
& Decode latency (ms/tok) & 3.26 & 3.25 & 3.24 & 3.24 & 3.25 & 3.25 & 3.27 \\
& Decode throughput (tok/s) & 306.4 & 307.5 & 308.2 & 308.3 & 307.6 & 307.5 & 306.1 \\
\end{longtable}
\endgroup
\end{landscape}

\subsection{Production-framework stress comparison}
To test whether the custom kernels introduce an obvious systems bottleneck at a larger shape, we configured ATMA's inference engine with a 9.2B-parameter stress shape and compared it with vLLM 0.25.0 serving Qwen3.5-9B on the same L40S in BF16~\citep{kwon2023vllm}. The models are not weight- or architecture-matched, so Table~\ref{table:vllm_comparison_matrix} is a backend stress comparison rather than a model-quality or universal serving claim. ATMA trails on short homogeneous prefill, is similar on long heterogeneous prefill, and leads the recorded high-batch decode cells.

\begin{table}[!ht]
\centering
\small
\setlength{\tabcolsep}{4pt}
\resizebox{\linewidth}{!}{%
\begin{tabular}{llrrr}
\toprule
\textbf{Phase} & \textbf{Batch/workload} & \textbf{ATMA engine} & \textbf{vLLM/Qwen3.5-9B} & \textbf{Ratio} \\
\midrule
Prefill & Homogeneous $B=8,T=512$ (4,096 tok) & 11,947 tok/s & 12,772 tok/s & 0.94$\times$ \\
& Heterogeneous short-heavy (816 tok) & 5,752 tok/s & 8,002 tok/s & 0.72$\times$ \\
& Heterogeneous mixed (4,384 tok) & 12,779 tok/s & 11,996 tok/s & 1.07$\times$ \\
& Heterogeneous long-tail (8,192 tok) & 12,251 tok/s & 11,721 tok/s & 1.05$\times$ \\
\midrule
Decode, context 512 & Batch 128 & 2,939 tok/s & 2,247 tok/s & 1.31$\times$ \\
& Batch 256 & 4,501 tok/s & 2,921 tok/s & 1.54$\times$ \\
& Batch 512 & 5,919 tok/s & 2,558 tok/s & 2.31$\times$ \\
& Heterogeneous $B=128$ (context 64--1536) & 2,950 tok/s & 2,403 tok/s & 1.23$\times$ \\
\bottomrule
\end{tabular}}
\caption{Recorded systems-scale throughput for ATMA's 9.2B stress shape and vLLM 0.25.0 serving Qwen3.5-9B on one L40S (BF16, greedy sampling). Ratio is ATMA/vLLM.}
\label{table:vllm_comparison_matrix}
\end{table}

\section{Stress and Environment Diagnostics}
\label{appendix:diagnostics}

\begin{figure}[h]
\centering
\includegraphics[width=\linewidth]{fig_stress_heatmap.pdf}
\caption{Block-wise sampled secant gains over 64 nested FinePDFs documents and 64 perturbation directions per document. The probe localizes checkpoint-specific changes but does not estimate a global Lipschitz constant.}
\label{fig:appendix_stress_heatmap}
\end{figure}

Figure~\ref{fig:appendix_stress_heatmap} reports the sampled gains. For each block $f_i$, the diagnostic samples
\begin{equation}
G_i=\frac{\lVert f_i(x+\delta x)-f_i(x)\rVert_{\rm rms}}{\lVert\delta x\rVert_{\rm rms}},
\qquad \lVert\delta x\rVert_{\rm rms}=0.02\lVert x\rVert_{\rm rms}.
\end{equation}
The maximum sampled gain is a lower bound on the local operator norm along sampled directions. It is not a tight bound on the block Lipschitz constant. In the observed NoPE collapse, changes first concentrate near blocks 6--7 and later propagate through the residual stream. Polar's sampled gains and activation moments remain comparatively flat through 256K.

\section{Checkpoint-Variability Diagnostic Journey}
\label{appendix:environment}

The variability finding emerged sequentially. Stage I used L4 GPUs, followed by the first 10B NoPE control on an L4. The anomaly became visible only after the remaining training moved to L40S: short-context convergence remained nearly unchanged while the 256K extrapolation results diverged. Table~\ref{table:environment_diagnostic_journey} records the subsequent elimination sequence. The recorded run configurations do not provide a paired controlled seed, so random initialization and the resulting stochastic trajectory remain confounded with the infrastructure transition.

\begin{table}[!ht]
\centering
\small
\setlength{\tabcolsep}{4pt}
\begin{tabular}{>{\raggedright\arraybackslash}p{0.18\linewidth}>{\raggedright\arraybackslash}p{0.34\linewidth}>{\raggedright\arraybackslash}p{0.36\linewidth}}
\toprule
\textbf{Step} & \textbf{Observation or intervention} & \textbf{Diagnostic implication} \\
\midrule
L4 baseline & Stage I and the first 10B NoPE control used L4, microbatch 4; 256K NLL was 4.07. & Established the original extrapolation behavior, but not a replicated device-class estimate. \\
L40S transition & The promoted NoPE run used L40S, microbatch 16; 2K validation ended at 2.8126 versus 2.8160 on L4, but 256K NLL rose to 10.77. & Revealed that nearly coincident short-context curves did not predict the long-context endpoint. \\
Evaluation replay & L4 and L40S checkpoints were re-evaluated under PyTorch 2.12 and 2.13 with unchanged scores. & Excluded evaluation-version mismatch for these checkpoints. \\
Microbatch control & NoPE was retrained on L40S with microbatch 4; 2K validation ended at 2.8128 and 256K NLL at 9.90. & Excluded microbatch size alone as the explanation in this comparison. \\
Deterministic fixtures & Synthetic 2K stream fixtures across the tested versions and microbatches showed no gradient or loss divergence. & Found no short-context execution discrepancy; did not test the full stochastic training trajectory. \\
\bottomrule
\end{tabular}
\caption{Diagnostic journey from the L4 control to the L40S follow-ups. Each implication is limited to the intervention actually performed.}
\label{table:environment_diagnostic_journey}
\end{table}

All five headline models were trained in the same L40S software and hardware environment, so their primary comparison is internally matched. The earlier L4 checkpoint is retained only as a post-hoc case study. It motivates a broader audit principle: long-context extrapolation claims should report device class, kernels, seeds, and extended-context diagnostics rather than infer robustness from short-context loss alone. The present evidence does not identify CUDA reduction order, floating-point accumulation, device class, or any other single mechanism as causal because the L4/L40S runs were neither initialized from a recorded common state nor replicated across seeds and kernel configurations.

\end{document}